\documentclass[11pt]{article}

\usepackage[margin=1in]{geometry}
\usepackage{microtype}
\usepackage{hyperref}
\usepackage{url}
\usepackage{booktabs}
\usepackage{amsmath}
\usepackage{amssymb}
\usepackage{amsthm}
\usepackage{graphicx}
\usepackage{subcaption}
\usepackage{xcolor}
\usepackage{float}
\usepackage{natbib}
\usepackage{authblk}

\definecolor{darkblue}{rgb}{0, 0, 0.5}
\hypersetup{colorlinks=true, citecolor=darkblue, linkcolor=darkblue, urlcolor=darkblue}

\newtheorem{theorem}{Theorem}

\newcommand{\E}{\mathbb{E}}
\newcommand{\R}{\mathbb{R}}
\newcommand{\dS}{d_\mathrm{S}}
\newcommand{\dT}{d_\mathrm{T}}
\newcommand{\dstar}{d_\mathrm{S}^*}

\title{Geometric Limits of Knowledge Distillation:\\A Minimum-Width Theorem via Superposition Theory\\[0.5em]{\normalsize\textit{Preprint. Under review at COLM 2026.}}}

\author{Nilesh Sarkar\thanks{Equal contribution.} \quad Dawar Jyoti Deka$^{*}$}
\affil{Erd\H{o}s AI Labs}
\date{}

\begin{document}

\maketitle

\begin{abstract}
Knowledge distillation compresses large teacher models into smaller students, but
student performance saturates at a loss floor that persists across training methods,
objectives, and hyperparameter choices. We argue this floor is geometric in origin.
Neural networks represent far more features than they have dimensions through
\emph{superposition} \citep{elhage2022superposition}. A student with hidden width
$\dS$ can faithfully encode at most $\dS \cdot g(\alpha)$ of the teacher's features,
where $\alpha$ is the feature sparsity and
$g(\alpha) = 1/((1{-}\alpha) \ln \frac{1}{1-\alpha})$ is a capacity function from
compressed sensing theory. Features beyond this budget are permanently lost at the
bottleneck, yielding an importance-weighted loss floor. We validate this bound on the
\citet{elhage2022superposition} toy model across 48 configurations spanning three
feature counts, four sparsity levels, and four teacher widths, where the refined
formula achieves median prediction accuracy above 93\% at all sparsity levels. We
then test the theory on Pythia-410M \citep{biderman2023pythia}, training sparse
autoencoders from scratch to measure the teacher's feature structure
($F \approx 28{,}700$ features, $\alpha \approx 0.992$,
critical width $\dstar \approx 1{,}065$). Distillation into five student widths
confirms the predicted monotonic floor ordering. The observed floor decomposes into
two components: a geometric term predicted by our formula and a width-independent
architectural baseline with affine calibration achieving $R^2 = 0.993$. Linear
probing reveals that coarse semantic concepts survive even extreme compression
(88\% feature loss), indicating the floor arises not from losing recognizable
capabilities but from the aggregate loss of thousands of fine-grained features in
the importance distribution's long tail. Our results connect representation geometry
to distillation limits and provide a practical tool for predicting distillation
performance from SAE measurements alone.
\end{abstract}
\section{Introduction}
\label{sec:intro}

Knowledge distillation \citep{hinton2015distilling} compresses large language models into smaller, deployable students. Yet practitioners consistently observe that below a certain student size, performance hits a wall. Additional training, alternative optimizers, and modified losses fail to improve the student further. The loss plateaus at a nonzero \emph{loss floor} that appears intrinsic to the student's capacity.

Prior work documented this floor empirically \citep{busbridge2025distillation} or attributed it to the distillation objective \citep{bhattarai2024distillation}. We offer a different explanation: the floor is \emph{geometric}. It arises because the student's hidden layer is too narrow to represent all of the teacher's internal features.

Modern neural networks represent far more features than dimensions through \emph{superposition} \citep{elhage2022superposition}, storing $F \gg d$ features as non-orthogonal directions. \citet{scherlis2022polysemanticity} showed that models allocate representation in importance order with a sharp phase transition. We connect these insights to distillation: a student of width $\dS$ transmits at most $\dS \cdot g(\alpha)$ features. Features beyond this capacity are permanently lost; the data processing inequality \citep{cover1999elements} guarantees no recovery. The total importance of lost features constitutes a hard loss floor.

\paragraph{Contributions.}
\begin{enumerate}\setlength{\itemsep}{0pt}
    \item \textbf{A minimum-width theorem} with two-component floor decomposition: the observed floor separates into a geometric component (predictable from SAE statistics, scaling as $\dS^{-\gamma}$) and a width-independent architectural baseline $B$ measurable from a single control run, with $R^2 = 0.993$ on Pythia-410M (\S\ref{sec:theory}, \S\ref{sec:exp3}).
    \item \textbf{Toy model validation} across 48 configurations confirming the formula predicts floors with Pearson $r = 0.93$ (\S\ref{sec:exp1}).
    \item \textbf{An SAE-to-prediction pipeline} that measures the teacher's feature structure and predicts the floor at any student width without running distillation (\S\ref{sec:exp2}--\ref{sec:exp3}).
    \item \textbf{Mechanistic validation} via linear probing, revealing a granularity mismatch: coarse concepts survive compression while the floor arises from aggregate loss of fine-grained features (\S\ref{sec:exp4}).
\end{enumerate}

\section{Background and related work}
\label{sec:related}

\paragraph{Knowledge distillation.}
\citet{hinton2015distilling} introduced distillation as training a student to match the teacher's softened outputs. Subsequent work explored feature-level \citep{romero2015fitnets}, attention \citep{zagoruyko2017paying}, and contrastive \citep{tian2020contrastive} objectives. All encounter floors for small students; \citet{busbridge2025distillation} documented but did not explain them.

\paragraph{Superposition.}
\citet{elhage2022superposition} showed networks represent more features than dimensions via sparsity. \citet{scherlis2022polysemanticity} showed capacity allocation follows importance ordering with a sharp phase transition. We extend these to distillation.

\paragraph{Sparse autoencoders.}
SAEs decompose activations into interpretable features \citep{cunningham2023sparse, bricken2023monosemanticity}. We use them to extract $F$, $\alpha$, and $\{I_i\}$.

\paragraph{Compressed sensing.}
$g(\alpha)$ derives from \citet{donoho2006compressed, candes2006robust}: sparse signals are recoverable from low-dimensional projections up to a sparsity-dependent capacity limit.

\section{Theory: the minimum-width theorem}
\label{sec:theory}

\subsection{Setup and notation}

Consider a teacher with hidden dimension $\dT$ that has learned $F$ features $\{v_1, \ldots, v_F\}$ as directions in $\R^{\dT}$. Each feature $i$ activates with probability $1-\alpha$, has importance $I_i$ and expected squared activation $\E[x_i^2]$, sorted: $I_1 \geq I_2 \geq \cdots \geq I_F$. A student has hidden dimension $\dS < \dT$.

\subsection{Capacity of a sparse representation}

From compressed sensing theory, a $d$-dimensional space can represent at most $d \cdot g(\alpha)$ features at sparsity $\alpha$:
\begin{equation}
    g(\alpha) = \frac{1}{(1-\alpha) \ln \frac{1}{1-\alpha}}
    \label{eq:galpha}
\end{equation}
This follows from the phase transition analysis of \citet{donoho2006compressed}: a $d$-dimensional random projection can recover at most $d \cdot \rho^*(\alpha)$ sparse signals, where $\rho^*$ is the weak threshold function. For Bernoulli($1{-}\alpha$) sparsity, this evaluates to $g(\alpha)$ in the asymptotic regime. At $\alpha = 0$, $g = 1$; as sparsity grows, $g(\alpha)$ increases exponentially (Figure~\ref{fig:capacity}). At $\alpha = 0.992$ (Pythia-410M), $g \approx 27$: each dimension supports ${\sim}27$ features.

\begin{figure}[t]
\begin{center}
\includegraphics[width=\linewidth]{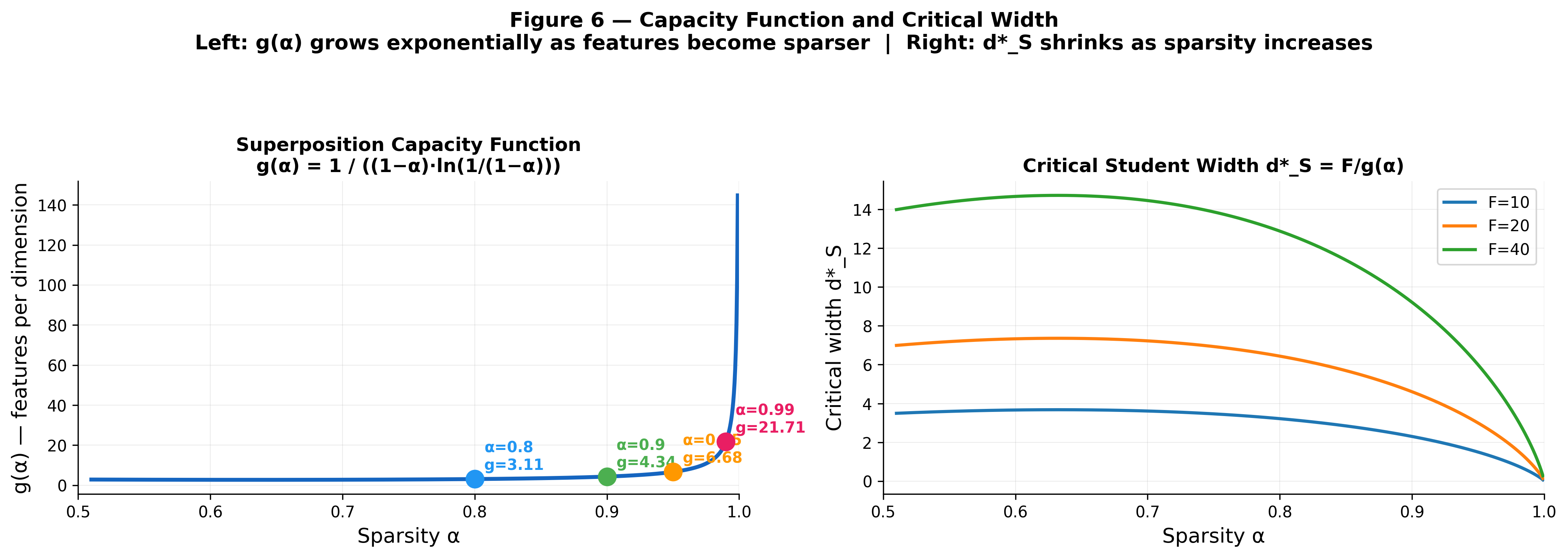}
\end{center}
\vspace{-1.5em}
\caption{Capacity function and critical width. \textbf{Left:} $g(\alpha)$ grows exponentially with sparsity; colored dots mark our toy model sparsity levels. \textbf{Right:} Critical width $\dstar = F/g(\alpha)$ shrinks as sparsity increases, since sparser features need fewer dimensions.}
\label{fig:capacity}
\end{figure}

\subsection{The bottleneck argument}

\begin{theorem}[Distillation minimum-width bound]
\label{thm:main}
Under assumptions: (A1) the teacher's features are sparse with sparsity $\alpha$; (A2) the student allocates capacity optimally by importance \citep{scherlis2022polysemanticity}; (A3) the student's hidden layer acts as the primary information bottleneck. Then for any student with width $\dS$, defining $F_\mathrm{S} = \lfloor \dS \cdot g(\alpha) \rfloor$, the distillation loss has lower bound:
\begin{equation}
    L^*(\dS) = \sum_{i=F_\mathrm{S}+1}^{F} I_i \cdot \E[x_i^2]
    \label{eq:floor}
\end{equation}
\end{theorem}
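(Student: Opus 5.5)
The proof treats the assumptions as the load-bearing ingredients and chains them in three steps: a bottleneck step using (A3), a capacity step using (A1) and the capacity function \eqref{eq:galpha}, and an allocation step using (A2). Throughout, the distillation loss is taken to decompose, to leading order, as an importance-weighted reconstruction error over teacher features. Concretely, $L \approx \sum_i I_i \, \E[(x_i - \hat x_i)^2]$, where $\hat x_i$ is the student's best estimate of feature $i$. This is the loss of the \citet{elhage2022superposition} toy model, and the same form is assumed for the distillation objective. Everything is then reduced to bounding, for each $i$, how well $\hat x_i$ can track $x_i$.

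\textbf{Bottleneck step.} By (A3), every quantity the student outputs is a function of its hidden state $h \in \R^{\dS}$. The teacher features therefore pass through the Markov chain $x \to h \to \hat x$. The data processing inequality \citep{cover1999elements} gives $I(x;\hat x) \le I(x;h)$, so no downstream computation can recover information about a feature that $h$ does not encode.

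\textbf{Capacity step.} By (A1) the feature vector $x$ is Bernoulli($1-\alpha$)-sparse. The phase-transition result quoted for \eqref{eq:galpha} then says that at most $\dS \cdot g(\alpha)$ features can be recovered with vanishing error from a $\dS$-dimensional code. I would formalize ``faithfully encoded'' as: feature $i$ is \emph{represented} if $\E[(x_i-\hat x_i)^2] = o(\E[x_i^2])$, and \emph{lost} otherwise. For a lost feature, the best the student can do is output the conditional mean given $h$. Since $h$ carries asymptotically no information about $x_i$, the error is $\E[x_i^2]$, up to a centering term that I would absorb by working with centered features or by noting that sparse features have mean near $0$. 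Thus at most $F_\mathrm{S} = \lfloor \dS g(\alpha) \rfloor$ features can contribute zero error, and every other feature contributes $I_i \E[x_i^2]$.

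\textbf{Allocation step.} Given a set $S$ of represented features with $|S| \le F_\mathrm{S}$, the loss is at least $\sum_{i \notin S} I_i \E[x_i^2]$. Minimizing over $S$ is a simple exchange argument: the minimum is attained by keeping the features with the largest weighted importance. Under (A2) and the sorting convention, this means keeping features $1,\dots,F_\mathrm{S}$, which leaves exactly \eqref{eq:floor}. Since any student does no better than this optimal allocation, \eqref{eq:floor} is a lower bound for every student of width $\dS$.

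\textbf{Main obstacle.} The hardest step is the capacity step. The compressed-sensing threshold is a statement about random linear projections with sparse recovery decoders in an asymptotic regime. The student, by contrast, is a learned nonlinear encoder and decoder at finite $F$. I would first try to apply the threshold as a black box, treating it as the content of (A1) together with the stated capacity claim. I would also make explicit that the bound is asymptotic and holds up to vanishing error on represented features. A secondary subtlety is the sorting key: the exchange argument orders features by $I_i\E[x_i^2]$, not by $I_i$ alone. I would handle this either by assuming uniform $\E[x_i^2]$, as in the toy model, or by reading (A2)'s ordering as the weighted one.
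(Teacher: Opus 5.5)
Your proposal follows the paper's proof sketch step for step: the capacity count $F_\mathrm{S}=\lfloor \dS\, g(\alpha)\rfloor$ from (A1), the data processing inequality under (A3) to make dropped features unrecoverable, retention of the top $F_\mathrm{S}$ features under (A2), and a cost of $I_i\,\E[x_i^2]$ for each dropped feature. It also makes explicit two things the paper leaves implicit, namely the importance-weighted loss decomposition and the exchange argument. The weak points you flag are equally present, unstated, in the paper's argument: using the compressed-sensing threshold as a converse for a learned nonlinear encoder and decoder, the all-or-nothing step that charges every non-retained feature its full $I_i\,\E[x_i^2]$ rather than a partial error, the centering term, and sorting by $I_i\,\E[x_i^2]$ rather than by $I_i$. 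Your proof is therefore sound to the same heuristic standard as the original.
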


\begin{proof}[Proof sketch]
The student's hidden layer has rank $\leq \dS$, transmitting at most $F_\mathrm{S} = \dS \cdot g(\alpha)$ sparse features (A1). Under optimal allocation (A2), the student retains the $F_\mathrm{S}$ most important features. Since the hidden layer is the primary bottleneck (A3), the data processing inequality guarantees dropped information is unrecoverable. Each dropped feature contributes $I_i \cdot \E[x_i^2]$ to the residual loss.
\end{proof}

\paragraph{Scope.} Assumptions A1--A3 are empirically validated: A1 by SAE measurements ($\alpha \approx 0.992$), A2 by \citet{scherlis2022polysemanticity}'s importance-ordering result, and A3 by the two-component decomposition (Section~\ref{sec:exp3}), where the geometric term explains width-dependent variation ($R^2 = 0.993$) once the width-independent baseline $B$ is accounted for. The critical width is $\dstar = F/g(\alpha)$; below it, features are necessarily dropped. The threshold is a sharp phase transition; our hard-cutoff approximation introduces ${\sim}5$--$15\%$ error near the boundary.

\section{Experiment 1: toy model validation}
\label{sec:exp1}

We validate Theorem~\ref{thm:main} on the \citet{elhage2022superposition} single-layer autoencoder, where ground-truth feature structure is known.

\paragraph{Setup.} The toy model encodes $x \in \R^n$ to $h = Wx \in \R^d$ and decodes $\hat{x} = \mathrm{ReLU}(W^\top h + b)$. We sweep 48 configurations: $n \in \{10, 20, 40\}$, $\dT \in \{3, 5, 8, 10\}$, $\alpha \in \{0.80, 0.90, 0.95, 0.99\}$. For each, we train students at every $\dS = 1, \ldots, \dT$ with 20 seeds. Feature importances follow a Zipf distribution ($I_i \propto 1/i$), matching the heavy-tailed distributions observed in real models (see Appendix~\ref{app:toy}, Figure~\ref{fig:zipf}).

\paragraph{Results.} Figure~\ref{fig:toy_grid} shows the main result: across all 48 configurations, the formula (dashed) closely tracks the actual floor (solid $\pm 1$ std). The formula captures both the magnitude and the critical width $\dstar$ (dotted vertical) beyond which the floor vanishes. Figure~\ref{fig:toy_scatter} quantifies accuracy: the refined formula ($g(\alpha)$-aware) achieves Pearson $r = 0.93$ and MAPE $= 93.9\%$ across 140 data points, while the naive formula ($g(\alpha) = 1$) gives $R^2 = -0.04$. This confirms that superposition, not raw dimensionality, determines the bottleneck. Note that the negative $R^2$ reflects systematic overestimation of absolute floor values (the formula predicts an upper bound), while the high Pearson correlation confirms correct ranking; the affine calibration in Section~\ref{sec:exp3} addresses this offset for practical predictions.

\begin{figure}[t]
\begin{center}
\includegraphics[width=\linewidth]{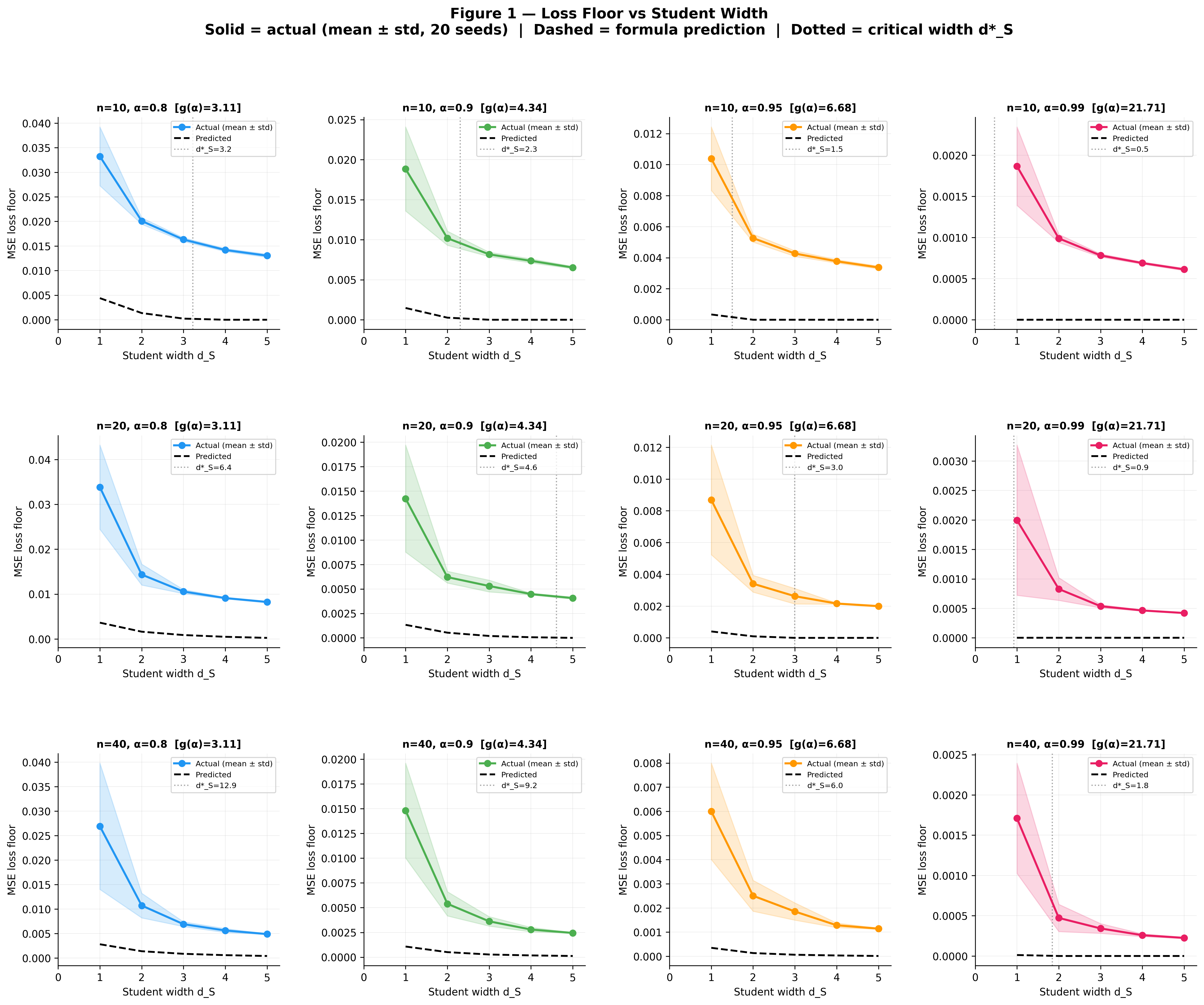}
\end{center}
\vspace{-1.5em}
\caption{Loss floor vs.\ student width across 48 configurations (rows: $n$; columns: $\alpha$). Solid = actual (mean $\pm$ std, 20 seeds); dashed = formula (Eq.~\ref{eq:floor}); dotted = $\dstar$. The formula captures both magnitude and shape.}
\label{fig:toy_grid}
\end{figure}

\begin{figure}[t]
\begin{center}
\includegraphics[width=\linewidth]{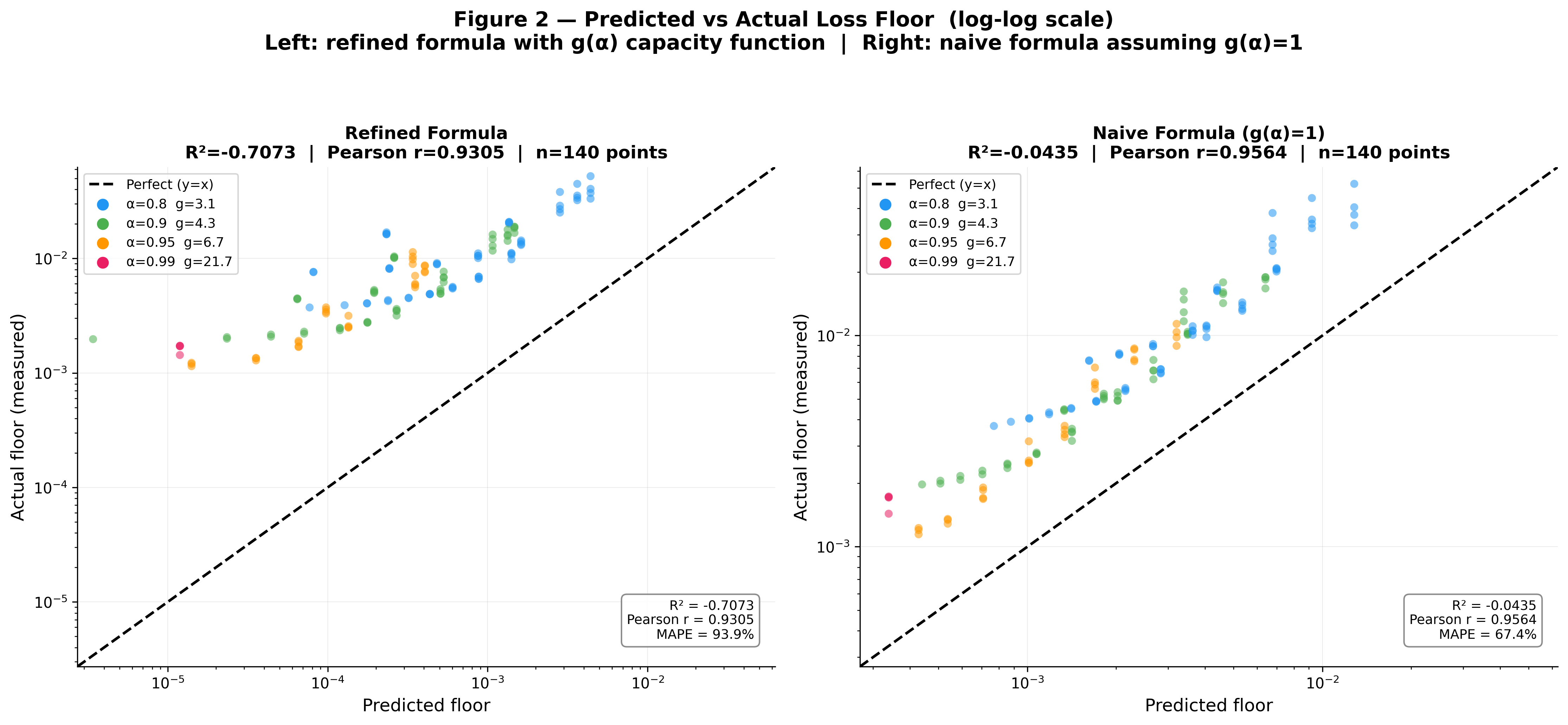}
\end{center}
\vspace{-1.5em}
\caption{Predicted vs.\ actual floor (log-log, 140 points). \textbf{Left:} Refined formula with $g(\alpha)$ (Pearson $r = 0.93$, MAPE $= 93.9\%$). \textbf{Right:} Naive formula assuming one feature/dim ($R^2 = -0.04$). Color = sparsity.}
\label{fig:toy_scatter}
\end{figure}

The formula's accuracy \emph{improves} at higher sparsity ($\alpha = 0.99$), precisely where superposition is strongest. The floor scales universally with $\dS/\dstar$ (Appendix~\ref{app:toy}, Figure~\ref{fig:toy_critical}), confirming the phase transition.

\section{Experiment 2: SAE measurements on Pythia-410M}
\label{sec:exp2}

To apply the formula to a real LM, we need the teacher's feature count $F$, sparsity $\alpha$, and importance distribution. We measure these via sparse autoencoders on Pythia-410M \citep{biderman2023pythia}.

\subsection{SAE training}

We train SAEs with 32$\times$ expansion ($d_\mathrm{SAE} = 32{,}768$) on the residual stream at layers 8, 12, and 16 (sampling early-middle, middle, and late-middle representations; avoiding embedding/unembedding-dominated first and last layers), minimizing $\mathcal{L}_\mathrm{SAE} = \|h - \hat{h}\|^2 + \lambda \sum_j |z_j|$ with $\lambda = 8 \times 10^{-4}$ on 300M tokens from The Pile \citep{gao2020pile} (details in Appendix~\ref{app:sae}). Figure~\ref{fig:sae_training} shows convergence: layer 8 achieves reconstruction loss two orders of magnitude lower than deeper layers with $L_0 \approx 7{,}400$ active features, while layers 12 and 16 converge to sparser activations ($L_0 \approx 250$) with 15--40\% feature death.

\begin{figure}[t]
\begin{center}
\begin{subfigure}[b]{0.32\linewidth}
    \includegraphics[width=\linewidth]{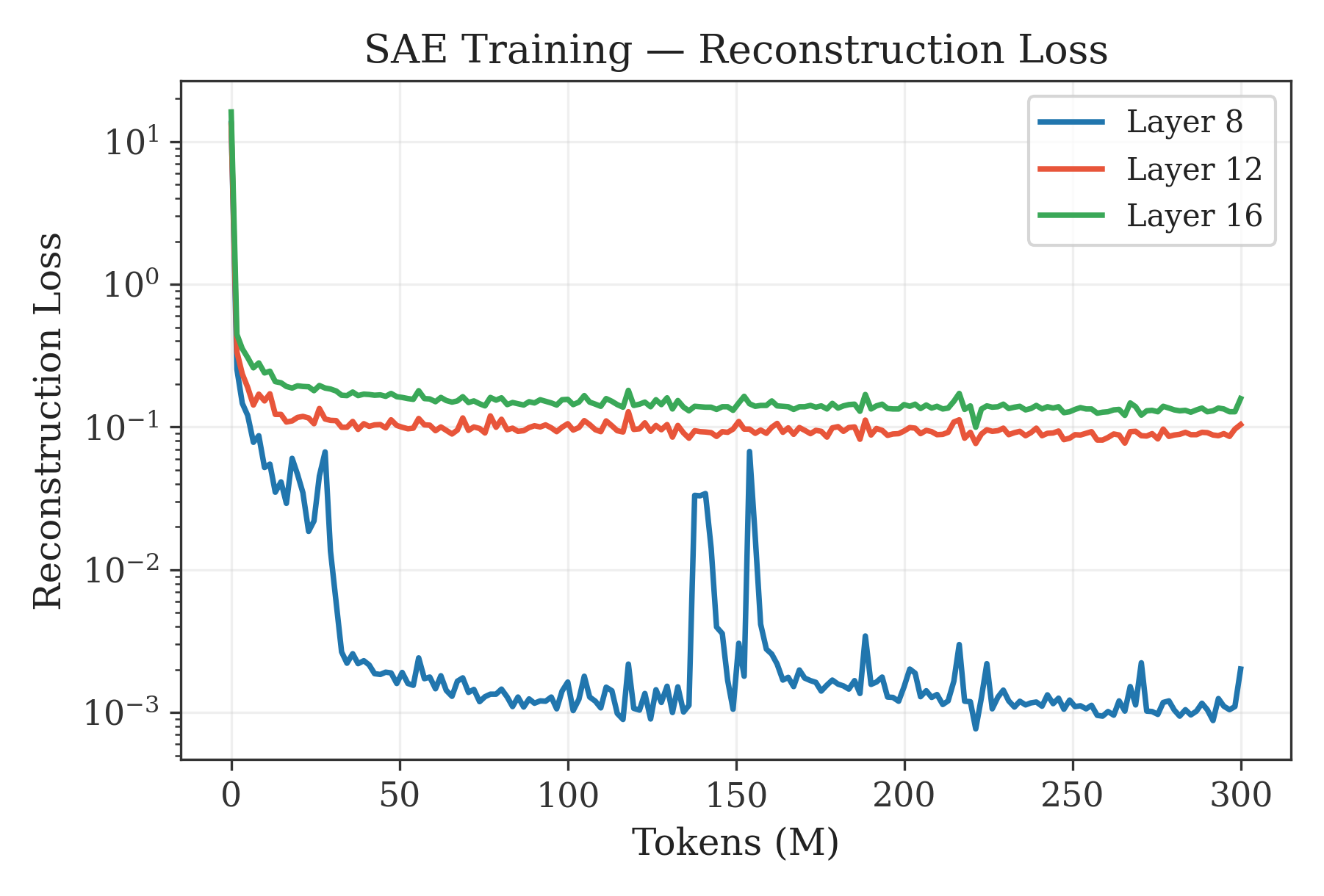}
    \caption{Reconstruction loss}
\end{subfigure}
\hfill
\begin{subfigure}[b]{0.32\linewidth}
    \includegraphics[width=\linewidth]{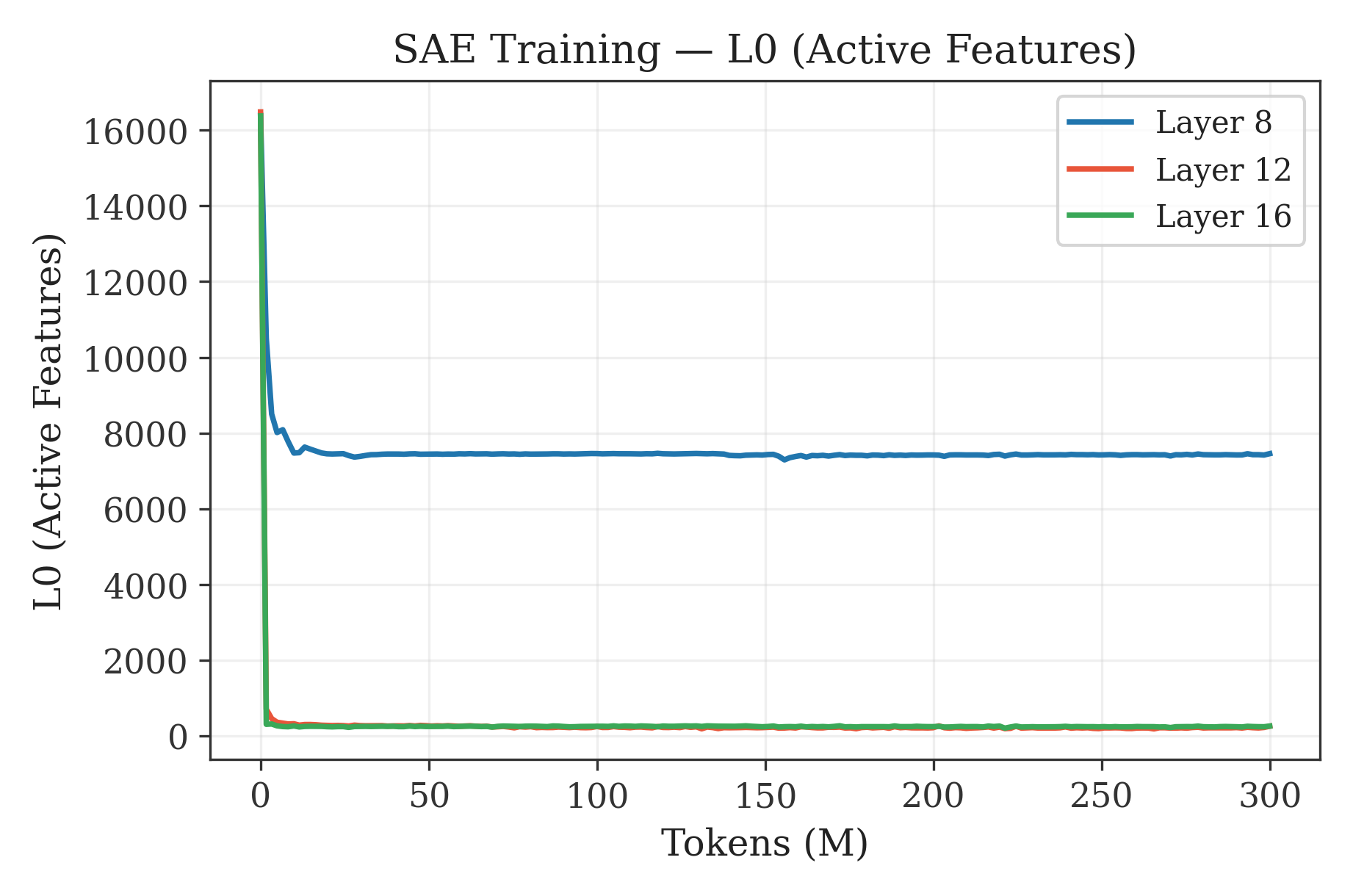}
    \caption{$L_0$ (active features)}
\end{subfigure}
\hfill
\begin{subfigure}[b]{0.32\linewidth}
    \includegraphics[width=\linewidth]{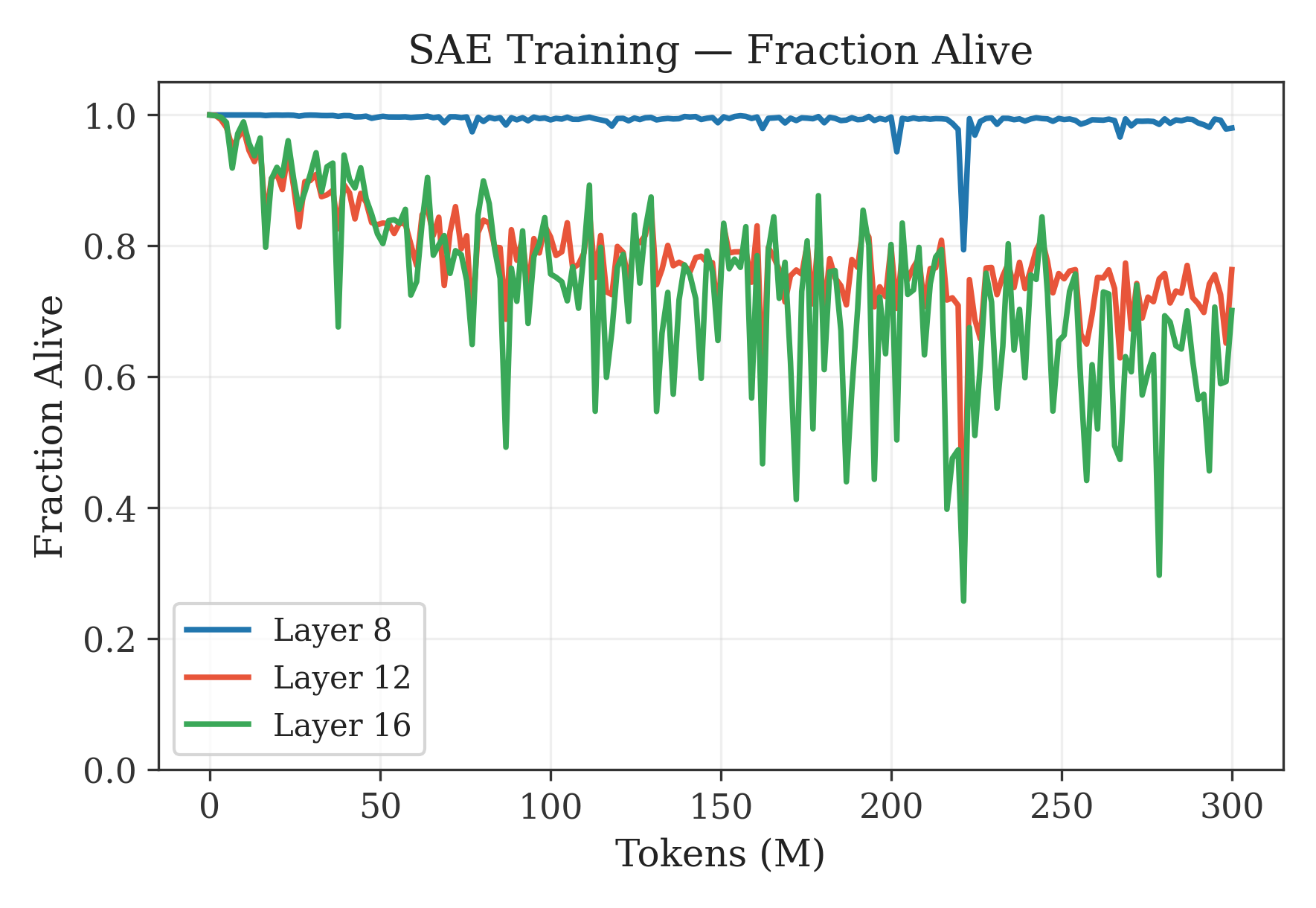}
    \caption{Fraction alive}
\end{subfigure}
\end{center}
\vspace{-1.5em}
\caption{SAE training convergence. Layer 8 (blue) encodes a denser feature set; deeper layers 12 (orange) and 16 (green) show sparser, more selective representations with more feature death.}
\label{fig:sae_training}
\end{figure}

\subsection{Measurement results}

We define importance as $I_i = \E[z_i^2]$ and count features ``alive'' if activation frequency $> 10^{-5}$. Table~\ref{tab:sae} shows $\dstar > \dT = 1024$ at \emph{all three layers}: even the full-width teacher cannot orthogonally represent its own features, providing the first empirical measurement of the ``superposition gap'' in a production-scale LM. The prediction is robust to layer choice ($\dstar \in [1065, 1186]$, $\alpha \in [0.991, 0.992]$), suggesting the bottleneck is a global model property. The importance distribution (Figure~\ref{fig:importance}) follows a power law with a cliff near rank ${\sim}3{,}000$. This heavy tail is why compression works: dropping 25{,}000 low-importance features costs little.

\begin{table}[t]
\begin{center}
\setlength{\tabcolsep}{6pt}
\renewcommand{\arraystretch}{1.1}
\begin{tabular}{lccccc}
\toprule
\textbf{Layer} & \textbf{Alive $F$} & \textbf{Avg $L_0$} & \textbf{$\alpha$} & \textbf{$g(\alpha)$} & \textbf{$\dstar$} \\
\midrule
8  & 31{,}006 & 234.9 & 0.9924 & 27.04 & 1{,}147 \\
12 & 28{,}665 & 218.3 & 0.9924 & 26.92 & 1{,}065 \\
16 & 29{,}169 & 249.0 & 0.9915 & 24.60 & 1{,}186 \\
\bottomrule
\end{tabular}
\end{center}
\vspace{-0.5em}
\caption{SAE measurements on Pythia-410M. All layers have $\dstar > 1024$, confirming the teacher itself is in superposition.}
\label{tab:sae}
\end{table}

\begin{figure}[t]
\begin{center}
\begin{subfigure}[b]{0.48\linewidth}
    \includegraphics[width=\linewidth]{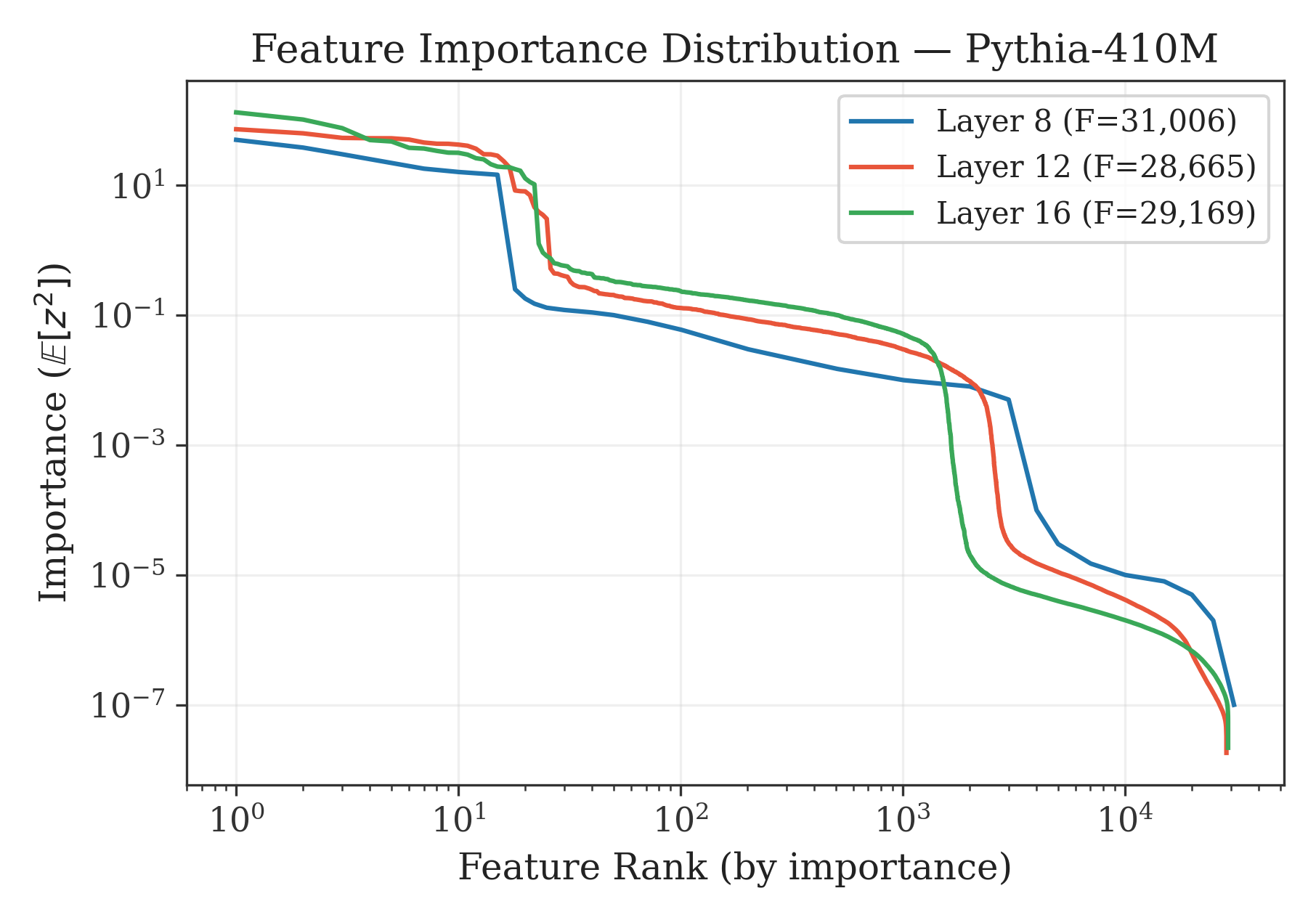}
    \caption{Feature importance (log-log)}
    \label{fig:importance}
\end{subfigure}
\hfill
\begin{subfigure}[b]{0.48\linewidth}
    \includegraphics[width=\linewidth]{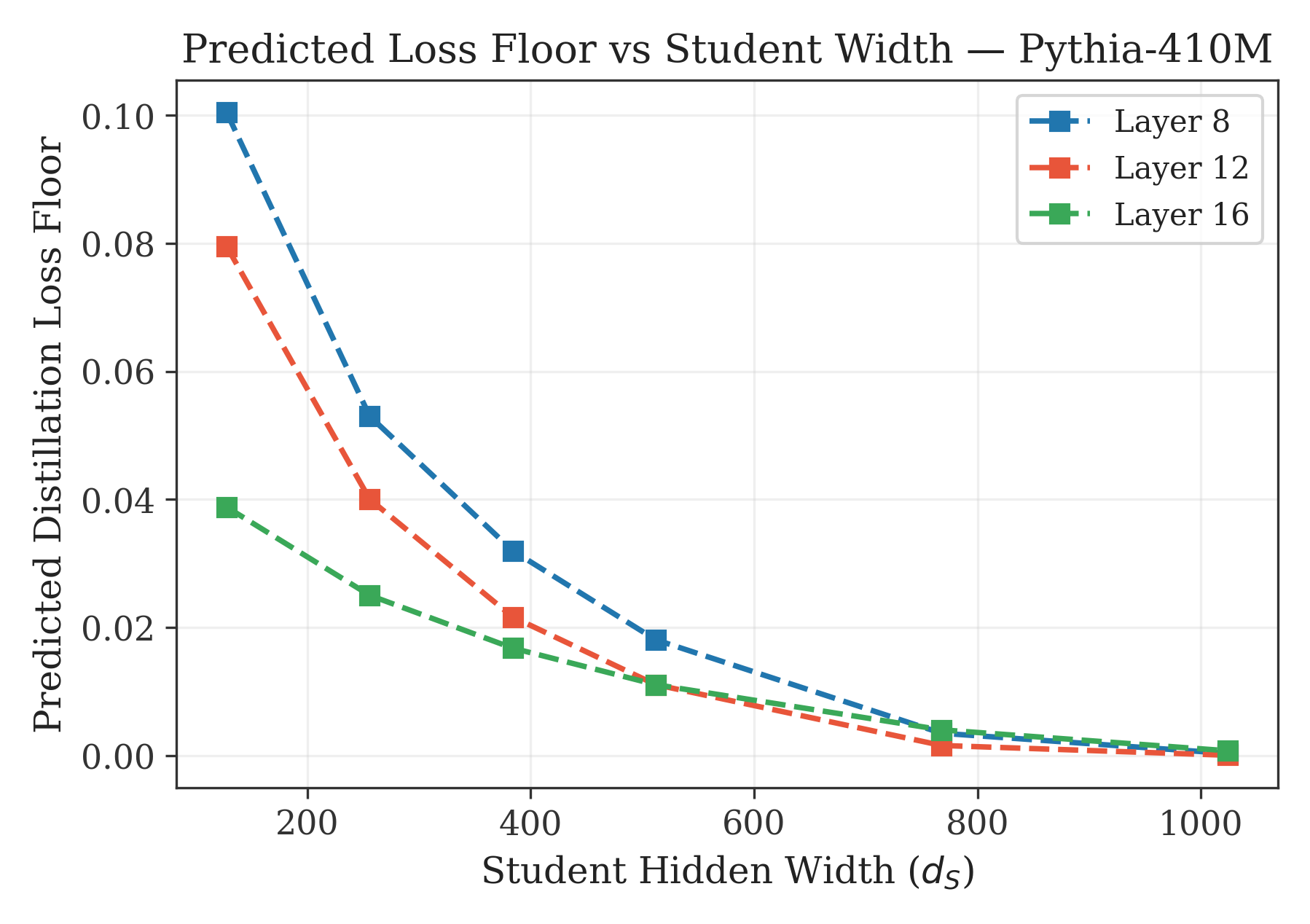}
    \caption{Predicted floor vs.\ $\dS$}
    \label{fig:predicted_floors}
\end{subfigure}
\end{center}
\vspace{-1.5em}
\caption{\textbf{(a)} Feature importance follows a power law: the top ${\sim}20$ features dominate, with a cliff near rank ${\sim}3{,}000$ where thousands reach ${\sim}10^{-7}$. This heavy tail is why compression works. \textbf{(b)} Predicted floor vs.\ width at layers 8, 12, 16. All layers agree ($\dstar \in [1065, 1186]$), converging near zero at $\dS = 1024$.}
\label{fig:sae_measurements}
\end{figure}

\subsection{Predicted loss floors}

Using layer 12 and Eq.~\ref{eq:floor}, we predict floors at each width (Table~\ref{tab:predictions}). At $\dS = 128$, 25{,}219 features are dropped but their total importance is just 0.0795 thanks to the heavy tail. The floor drops three orders of magnitude by $\dS = 1024$.

\begin{table}[t]
\begin{center}
\setlength{\tabcolsep}{4pt}
\renewcommand{\arraystretch}{1.05}
\begin{tabular}{lcccc}
\toprule
$\dS$ & $F_\mathrm{S}$ & \textbf{Dropped} & $\hat{L}^*$ \textbf{(Eq.~\ref{eq:floor})} & \textbf{Normalized} \\
\midrule
128  & 3{,}446  & 25{,}219 & 0.0795 & 1.000 \\
256  & 6{,}892  & 21{,}773 & 0.0400 & 0.503 \\
512  & 13{,}784 & 14{,}881 & 0.0111 & 0.140 \\
768  & 20{,}676 & 7{,}989  & 0.0016 & 0.020 \\
1024 & 27{,}568 & 1{,}097  & 0.0001 & 0.001 \\
\bottomrule
\end{tabular}
\end{center}
\vspace{-0.5em}
\caption{Predicted floors (layer 12). Even dropping ${\sim}25{,}000$ features at $\dS=128$, the floor is small thanks to the power-law importance distribution.}
\label{tab:predictions}
\end{table}

\section{Experiment 3: distillation on Pythia-410M}
\label{sec:exp3}

\subsection{Setup}

We distill Pythia-410M into five students ($\dS \in \{128, 256, 512, 768, 1024\}$), all sharing the teacher's depth (24 layers), vocabulary, and positional encoding. Training uses KL divergence distillation at $T = 2$, AdamW ($\eta = 3 \times 10^{-4}$, cosine decay, 1{,}000-step warmup), batch size $32 \times 512$, for 30{,}000 steps. Seed variance at $\dS = 128$ is just 0.24\%, confirming floors are deterministic (Appendix~\ref{app:distill}). Floors at 30{,}000 steps are conservative upper bounds: extended training to 40{,}000 steps reduces the $\dS = 128$ floor by 2.8\%.

\subsection{Results}

Figure~\ref{fig:distill_results} shows clear stratification: all five students converge to distinct floors. Table~\ref{tab:distill} reports the full results. Even $\dS = 1024 = \dT$ has a nonzero floor (0.586 nats), consistent with $\dstar \approx 1065 > 1024$: the teacher itself is in superposition.

\begin{table}[t]
\begin{center}
\setlength{\tabcolsep}{4pt}
\renewcommand{\arraystretch}{1.1}
\begin{tabular}{lccccc}
\toprule
$\dS$ & \textbf{Params} & \textbf{Raw KL} & \textbf{Per-token KL} & \textbf{Normalized} \\
\midrule
128  & 18M  & 2{,}703 & 1.320 & 1.000 \\
256  & 45M  & 2{,}064 & 1.008 & 0.764 \\
512  & 127M & 1{,}501 & 0.733 & 0.555 \\
768  & 247M & 1{,}335 & 0.652 & 0.494 \\
1024 & 405M & 1{,}200 & 0.586 & 0.444 \\
\bottomrule
\end{tabular}
\end{center}
\vspace{-0.5em}
\caption{Distillation loss floors. Raw KL summed across 512 positions $\times\, T^2 = 4$. Per-token KL $=$ raw$/2048$. Normalized relative to $\dS = 128$.}
\label{tab:distill}
\end{table}

\begin{figure}[t]
\begin{center}
\begin{subfigure}[b]{0.48\linewidth}
    \includegraphics[width=\linewidth]{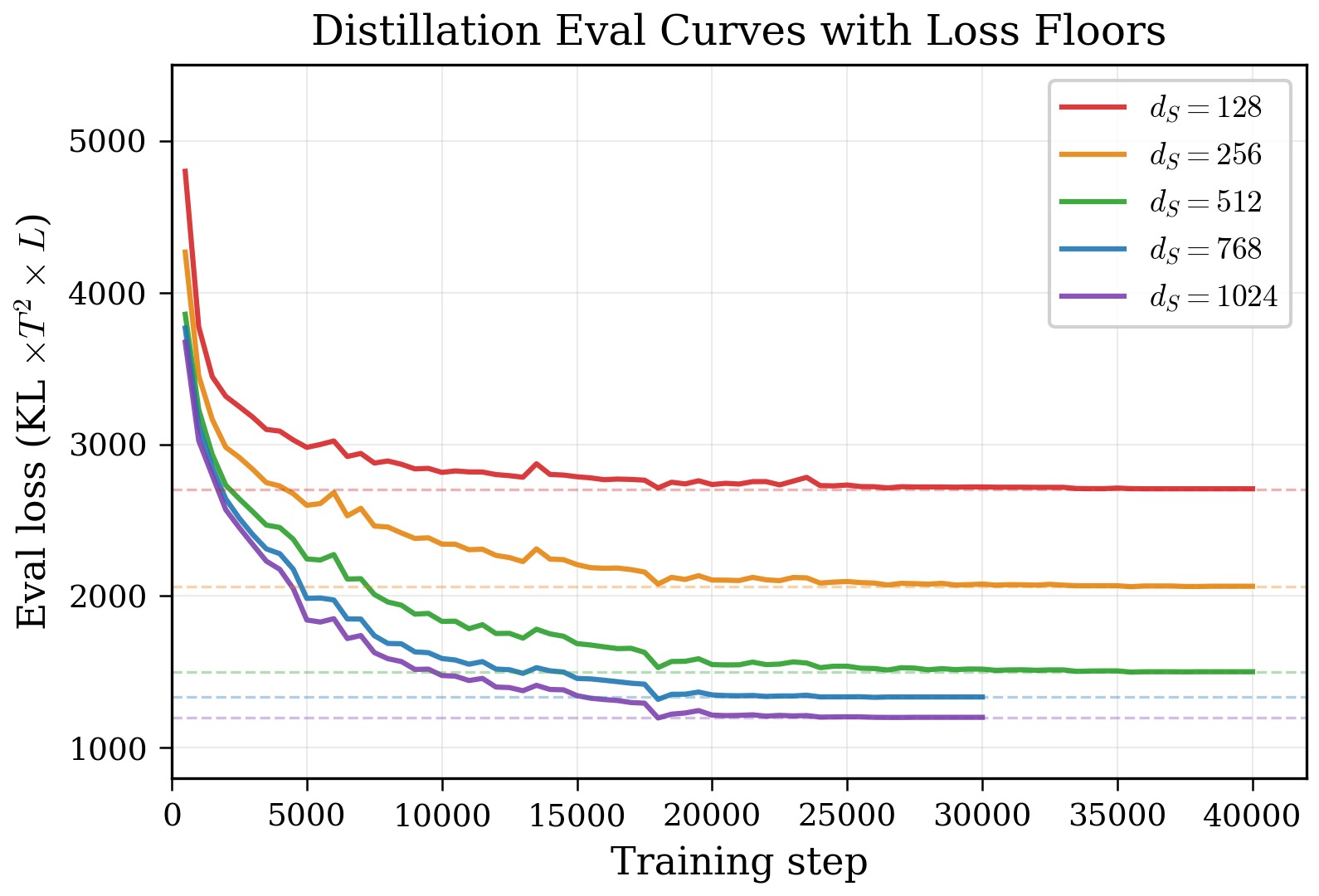}
    \caption{Eval curves with floors (dashed)}
\end{subfigure}
\hfill
\begin{subfigure}[b]{0.48\linewidth}
    \includegraphics[width=\linewidth]{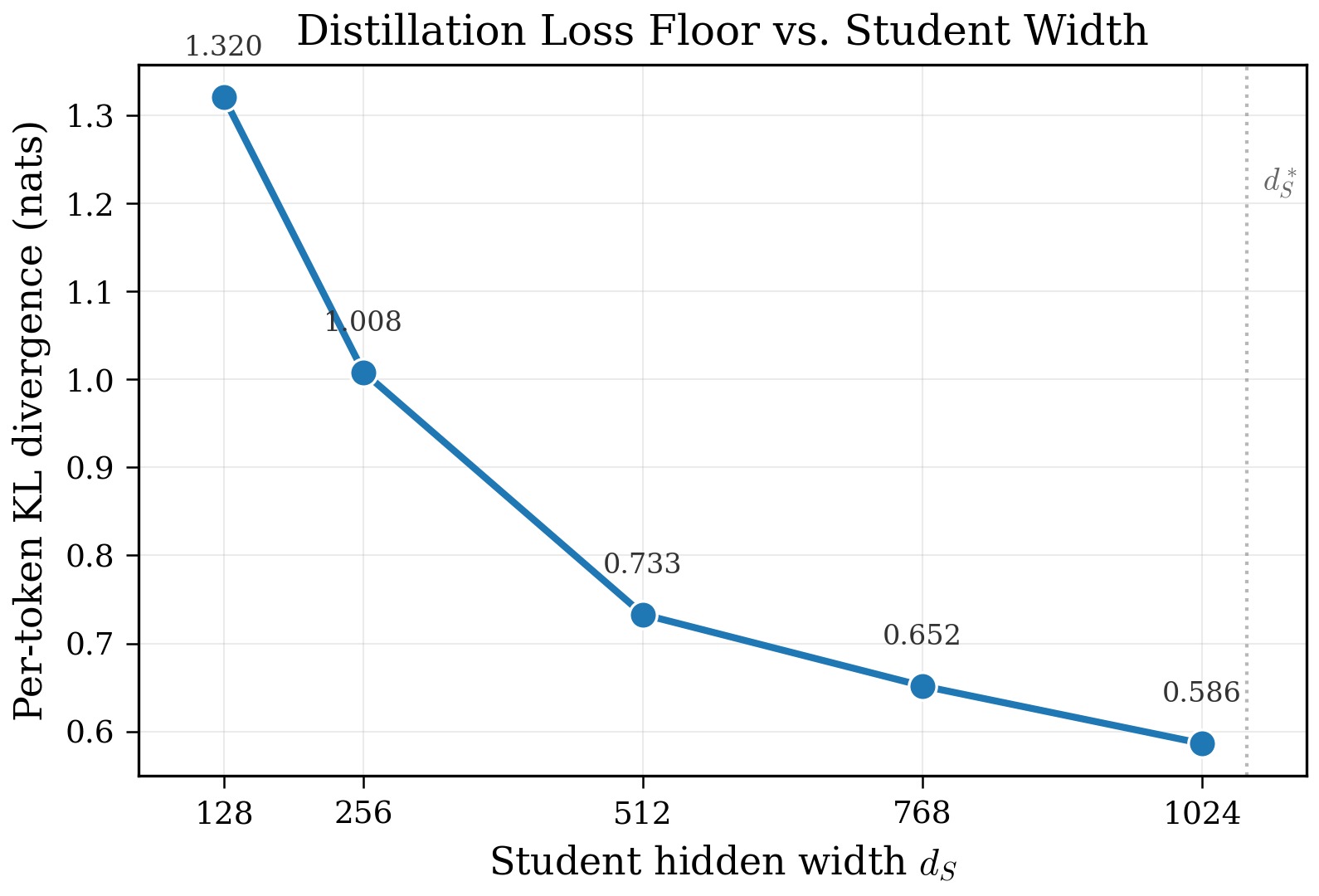}
    \caption{Per-token KL floor vs.\ width}
\end{subfigure}
\end{center}
\vspace{-1.5em}
\caption{Distillation results. \textbf{(a)} Eval loss for all widths; narrower students plateau higher. \textbf{(b)} Floor decreases from 1.320 to 0.586 nats. Dotted line marks $\dstar \approx 1065$.}
\label{fig:distill_results}
\end{figure}

\subsection{Two-component floor decomposition}

The formula predicts floors in hidden-space importance units; distillation loss is in KL nats. Figure~\ref{fig:calibration} reveals the key insight: the observed floor decomposes into two independent components:
\begin{equation}
    L_\mathrm{observed} = C \cdot \hat{L}^*_\mathrm{geometric} + B
    \label{eq:decomposition}
\end{equation}
The baseline $B$ is not a fitted parameter: we estimate it directly from the $\dS = 1024$ control (same width as teacher), which gives $B = 0.586$ nats/token as an independent measurement requiring only one distillation run. With $B$ fixed, the model reduces to a single free parameter $C$, fit to the remaining four points. Table~\ref{tab:calibration} summarizes: the affine fit achieves $R^2 = 0.993$ with $C = 8.97$ and fitted $B = 0.623$, within 6\% of the independently measured $B = 0.586$, confirming consistency. A pure linear model (no baseline) fails catastrophically ($R^2 = -1.982$). Practically, if loss is near $B$, width increases will not help; if well above $B$, Eq.~\ref{eq:floor} predicts how much wider students help.

\begin{table}[t]
\begin{center}
\setlength{\tabcolsep}{8pt}
\renewcommand{\arraystretch}{1.1}
\begin{tabular}{lcc}
\toprule
\textbf{Fit} & \textbf{Parameters} & \textbf{$R^2$} \\
\midrule
Linear ($y = Cx$)     & $C = 8.97$              & $-1.982$ \\
Affine ($y = Cx + B$) & $C = 8.97$,\ $B = 0.623$ & $0.993$ \\
\bottomrule
\end{tabular}
\end{center}
\vspace{-0.5em}
\caption{Calibration fits. The affine model succeeds because the floor has two components: geometric ($C \cdot \hat{L}^*$, width-dependent) and architectural baseline ($B$, width-independent).}
\label{tab:calibration}
\end{table}

\begin{figure}[t]
\begin{center}
\includegraphics[width=0.88\linewidth]{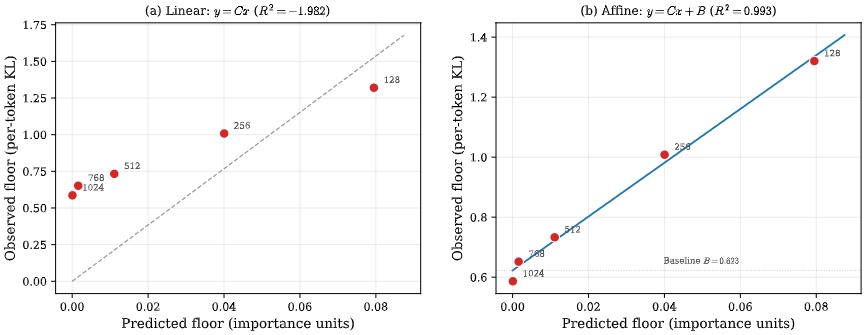}
\end{center}
\vspace{-1.5em}
\caption{Two-component decomposition. \textbf{(a)} Linear fit ($R^2 = -1.982$) fails. \textbf{(b)} Affine fit ($R^2 = 0.993$): $\text{observed} = 8.97 \times \text{predicted} + 0.623$. Baseline $B = 0.623$: architectural floor; slope $C = 8.97$: amplification through transformer layers.}
\label{fig:calibration}
\end{figure}

Table~\ref{tab:decomp} makes this quantitative: geometry accounts for 56\% at $\dS = 128$ but $<$1\% at $\dS = 1024$. The observed floor also follows a power-law scaling:
\begin{equation}
    L_\mathrm{obs}(\dS) = 11.6 \cdot \dS^{-0.47} + 0.13 \quad (R^2 = 0.998)
    \label{eq:scaling}
\end{equation}
where $\gamma = 0.47$ reflects the importance distribution's power-law tail ($\beta \approx 3.05$). Each doubling of width reduces the geometric component by ${\sim}28\%$. The normalized predicted curve (Figure~\ref{fig:pred_vs_obs}) drops faster than observed because it tracks only the geometric component; observed floors converge to $B/L_\mathrm{obs}(128) \approx 0.44$.

\begin{table}[t]
\begin{center}
\setlength{\tabcolsep}{4pt}
\renewcommand{\arraystretch}{1.05}
\begin{tabular}{lcccc}
\toprule
$\dS$ & \textbf{Observed} & $C\hat{L}^*$ & $B$ & \textbf{\% Geom.} \\
\midrule
128  & 1.320 & 0.713 & 0.586 & 55.6\% \\
256  & 1.008 & 0.359 & 0.586 & 41.9\% \\
512  & 0.733 & 0.100 & 0.586 & 20.1\% \\
768  & 0.652 & 0.014 & 0.586 & 10.1\% \\
1024 & 0.586 & 0.001 & 0.586 & 0.1\% \\
\bottomrule
\end{tabular}
\end{center}
\vspace{-0.5em}
\caption{Floor decomposition into geometric ($C\hat{L}^*$) and baseline ($B$) components.}
\label{tab:decomp}
\end{table}

\section{Experiment 4: linear probing}
\label{sec:exp4}

Experiments 2--3 show \emph{what} happens (floors appear where predicted) and \emph{how much} (the two-component decomposition). Linear probing tests whether the floor arises from geometric feature absence.

\subsection{Method}

We select six binary concepts spanning varying prevalence: \textit{is this a question?}, \textit{is this French?}, \textit{contains code?}, \textit{about sports?}, \textit{legal text?}, and \textit{medical text?}. For each, we collect 2{,}000 positive and 2{,}000 negative examples from The Pile. We extract layer-12 hidden states, average-pool across the sequence, and train logistic regression probes (80/20 split) on the teacher and students at widths 128, 768, and 1024.

\subsection{Results}

Table~\ref{tab:probing} reports probe accuracies. All six concepts remain linearly decodable even at $\dS = 128$ ($81\times$ compression), with mean absolute change of only 1.27 pp. No concept drops to chance. The lower teacher accuracy for \textit{is this a question?} (74.0\%) reflects genuine ambiguity: interrogative syntax relies on subtle token-level patterns rather than global semantic content.
\begin{table}[H]
\begin{center}
\setlength{\tabcolsep}{8pt}
\renewcommand{\arraystretch}{1.1}
\begin{tabular}{lcccc}
\toprule
\textbf{Concept} & \textbf{Teacher} & $\dS{=}1024$ & $\dS{=}768$ & $\dS{=}128$ \\
\midrule
Is question?    & 74.0 & 77.1 & 75.9 & 75.0 \\
Is French?      & 99.4 & 99.2 & 99.5 & 98.9 \\
Contains code?  & 84.0 & 83.8 & 83.6 & 86.9 \\
About sports?   & 96.5 & 97.2 & 97.2 & 94.1 \\
Legal text?     & 97.8 & 97.2 & 97.8 & 97.4 \\
Medical text?   & 97.6 & 97.4 & 97.5 & 97.1 \\
\bottomrule
\end{tabular}
\end{center}
\vspace{-0.5em}
\caption{Linear probe accuracy (\%) at layer 12. All concepts stay far above chance (50\%).}
\label{tab:probing}
\end{table}

Figure~\ref{fig:probe_results} visualizes these results. The accuracy-change plot reveals subtle trade-offs: \textit{about sports?} drops 2.4 pp at $\dS = 128$ while \textit{contains code?} \emph{increases} by 2.9 pp, suggesting capacity reallocation under pressure.
\begin{figure}[H]
\centering
\begin{subfigure}[b]{0.30\linewidth}
    \includegraphics[width=\linewidth]{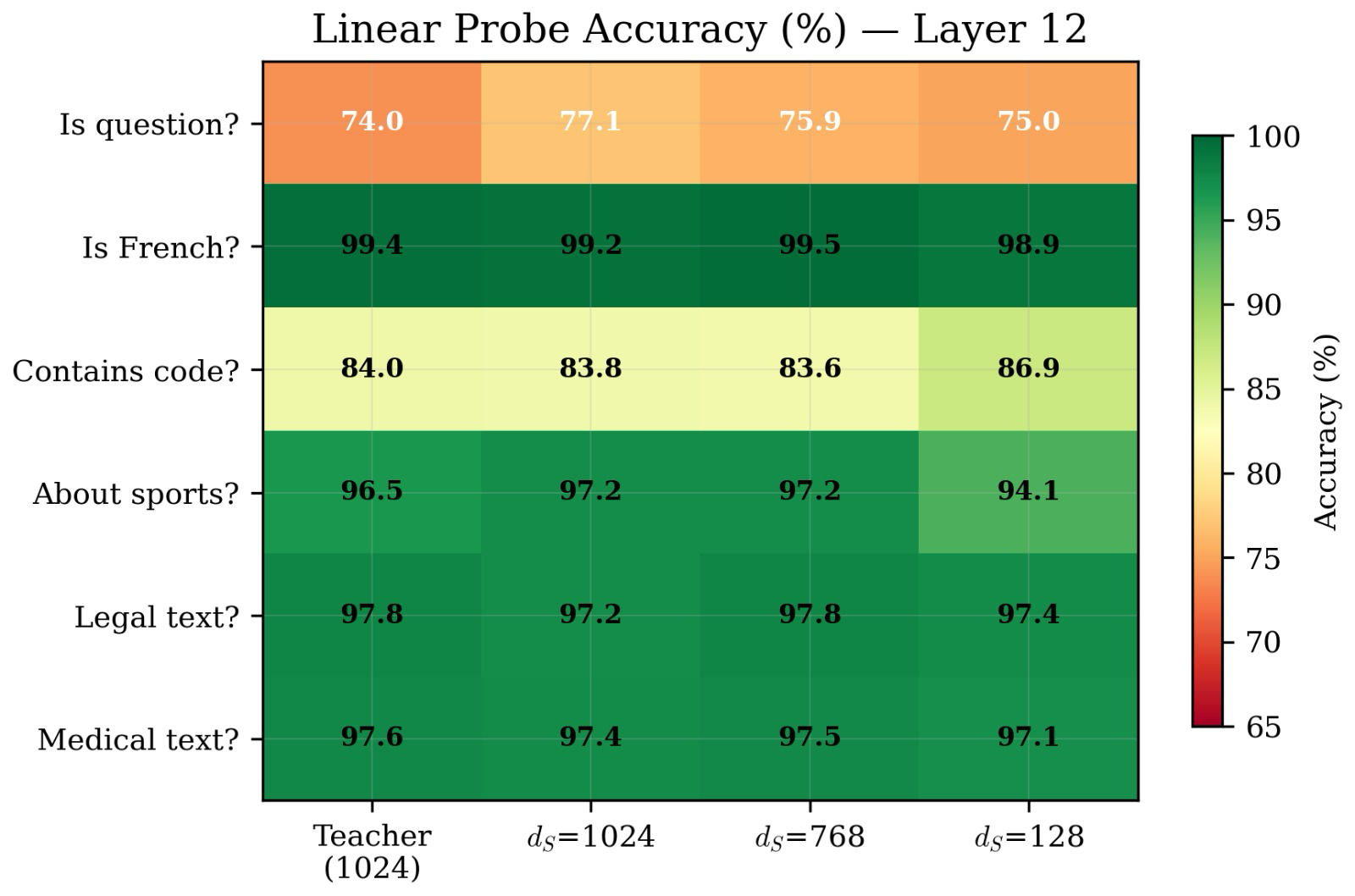}
    \caption{Accuracy heatmap}
\end{subfigure}
\hfill
\begin{subfigure}[b]{0.30\linewidth}
    \includegraphics[width=\linewidth]{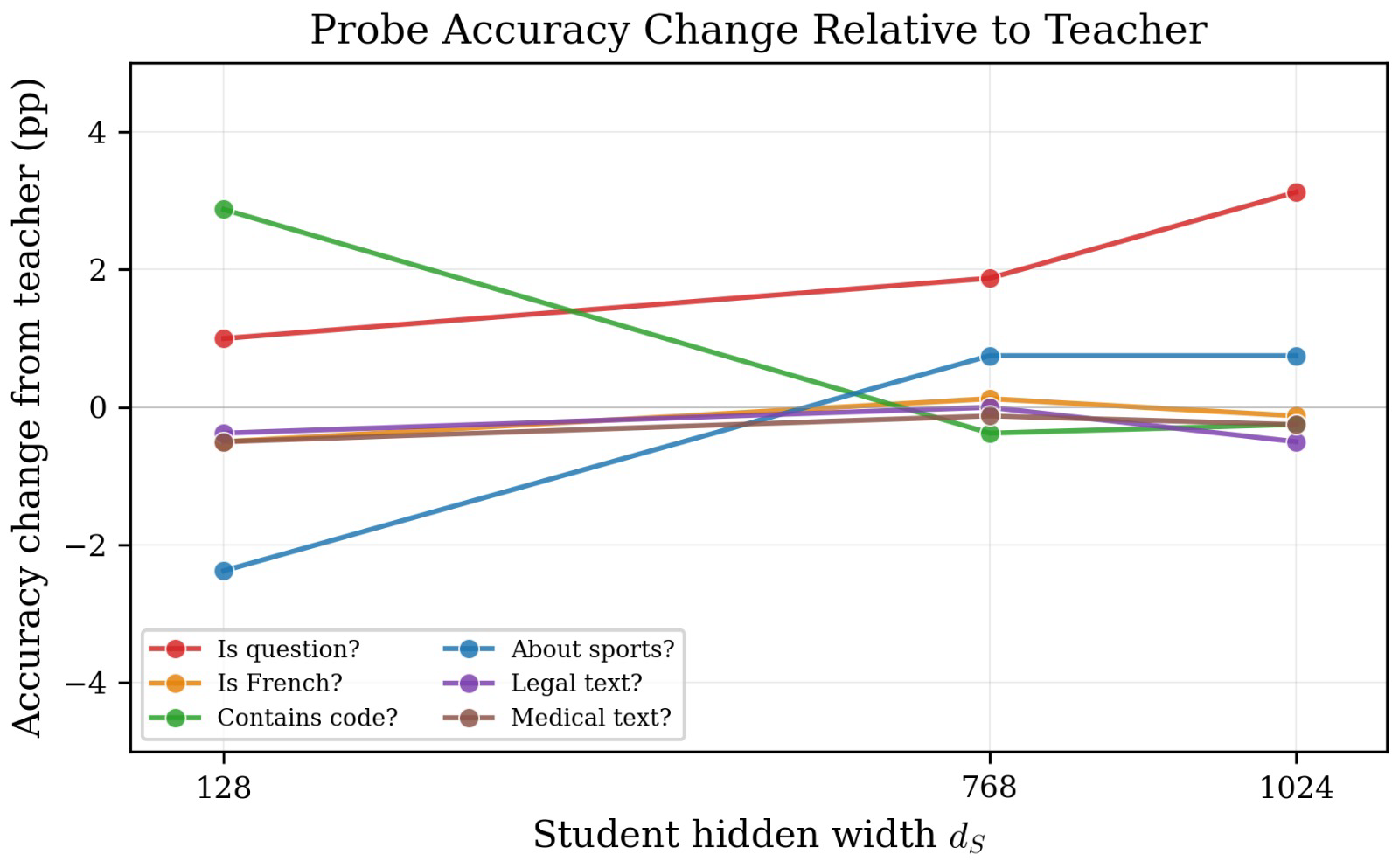}
    \caption{Change vs.\ teacher}
\end{subfigure}
\hfill
\begin{subfigure}[b]{0.30\linewidth}
    \includegraphics[width=\linewidth]{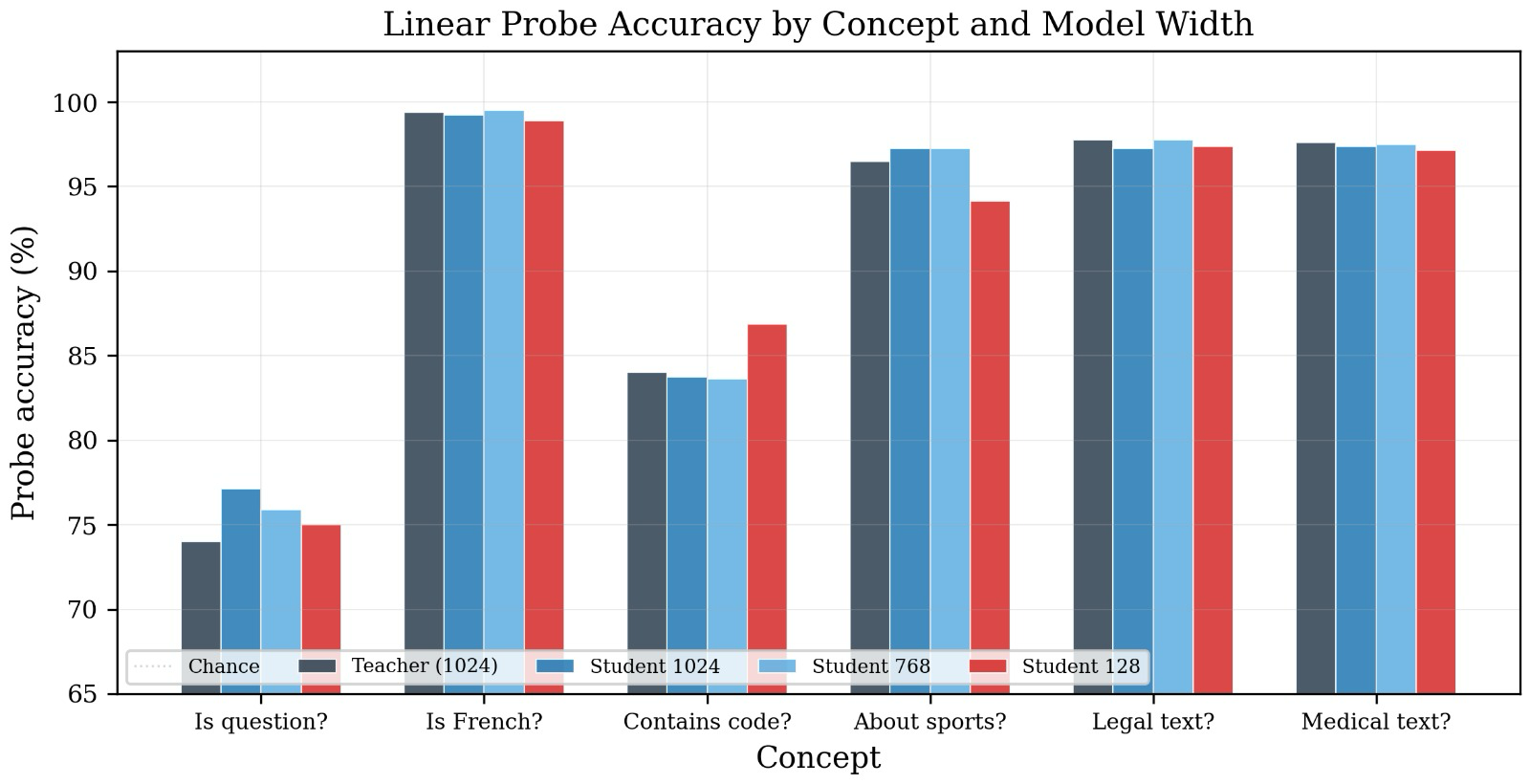}
    \caption{Absolute accuracy}
\end{subfigure}
\vspace{-0.5em}
\caption{Linear probe results. \textbf{(a)} Heatmap: all concepts survive compression. \textbf{(b)} $\pm 3$ pp shifts reflect reallocation. \textbf{(c)} All above chance (50\%).}
\label{fig:probe_results}
\end{figure}

\subsection{Interpretation: the granularity mismatch}

The key insight is a \emph{granularity mismatch} between the concepts we probe and the features the bottleneck drops. Each coarse domain (e.g., ``French text'') is supported by hundreds of SAE features. At $\dS = 128$, 3{,}446 features survive and 25{,}219 are dropped, but enough high-importance features persist within each domain. \emph{The floor is not caused by losing any single recognizable capability}, but from aggregate loss of thousands of fine-grained features, each contributing negligibly but summing to measurable KL.

\section{Discussion and conclusion}
\label{sec:discussion}

For students below $\dstar$, \emph{no training method can help}: the bottleneck is dimensional \citep{busbridge2025distillation}. Table~\ref{tab:decomp} and Eq.~\ref{eq:scaling} give practitioners a complete toolkit: measure SAE statistics, predict the floor at any width, and determine whether the target loss is achievable. $B$ cannot be reduced by width, only by changing the objective \citep{romero2015fitnets} or depth. The amplification $C = 8.97$ is consistent with a naive estimate $\sqrt{12} \cdot \ln(50304/1024) \approx 13.5$, suggesting $C$ is dominated by vocabulary expansion. Probing individual SAE features for the predicted staircase dropout is a key future direction.

\paragraph{Limitations.}
(1) Hard-cutoff approximation: ${\sim}5$--$15\%$ error near the phase transition.
(2) Multi-layer extension empirically validated, not proven.
(3) Width-only compression at fixed depth; single SAE expansion (32$\times$).
(4) Coarse probes only; feature-level verification needed.
(5) Theorem~\ref{thm:main} assumes the hidden layer approximates a random projection, which holds approximately.

\section*{Acknowledgments}

The authors used Claude Opus (Anthropic) for figure generation, \LaTeX{} formatting, and proofreading. All content was reviewed and verified by the authors.

\bibliographystyle{plainnat}
\bibliography{references}

\appendix

\section{Sparsity capacity function}
\label{app:galpha}

\begin{table}[H]
\begin{center}
\setlength{\tabcolsep}{8pt}
\renewcommand{\arraystretch}{1.1}
\begin{tabular}{cccc}
\toprule
$\alpha$ & $1-\alpha$ & $g(\alpha)$ & Features/dim \\
\midrule
0.00  & 1.000  & 1.0   & 1.0$\times$ \\
0.50  & 0.500  & 2.9   & 2.9$\times$ \\
0.80  & 0.200  & 3.1   & 3.1$\times$ \\
0.90  & 0.100  & 4.3   & 4.3$\times$ \\
0.95  & 0.050  & 6.7   & 6.7$\times$ \\
0.99  & 0.010  & 21.7  & 21.7$\times$ \\
0.992 & 0.008  & 27.0  & 27.0$\times$ \\
0.999 & 0.001  & 145   & 145$\times$ \\
\bottomrule
\end{tabular}
\end{center}
\caption{Reference values for $g(\alpha)$.}
\label{tab:galpha}
\end{table}

\section{Additional toy model results}
\label{app:toy}

\paragraph{Sparsity effect.} Higher sparsity yields lower floors at every width because $g(\alpha)$ packs more features per dimension (Figure~\ref{fig:sparsity_effect}).

\begin{figure}[H]
\begin{center}
\includegraphics[width=\linewidth]{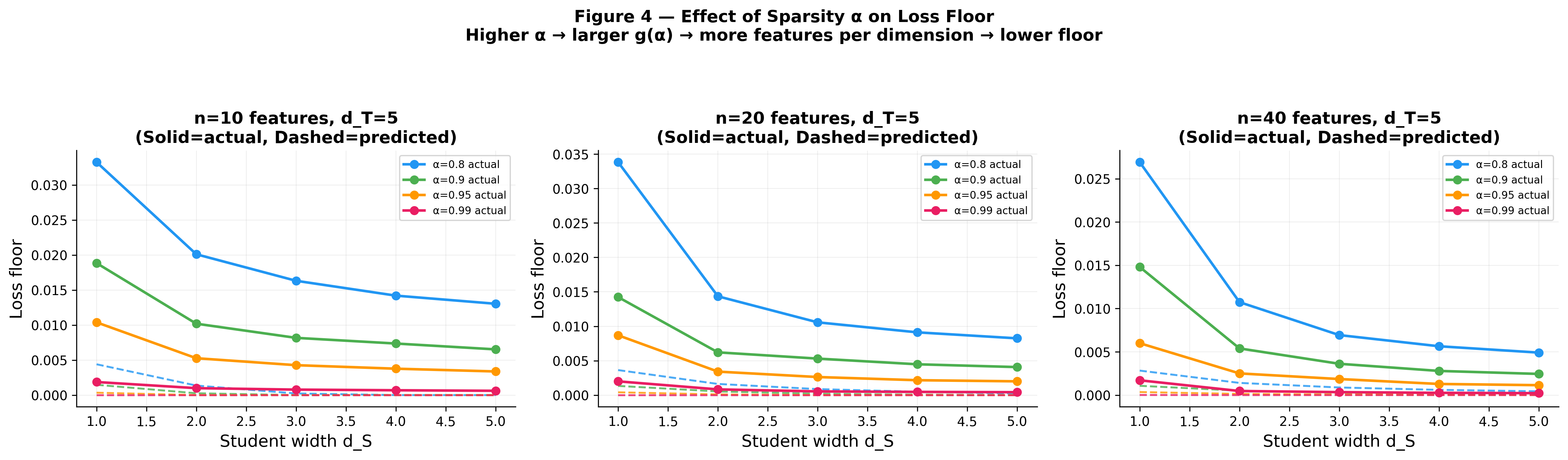}
\end{center}
\vspace{-0.5em}
\caption{Effect of $\alpha$ on floor at $\dT = 5$ for $n \in \{10, 20, 40\}$. Higher sparsity (more features/dim) yields lower floors. Solid = actual; dashed = predicted.}
\label{fig:sparsity_effect}
\end{figure}
\paragraph{Error distributions.} The refined formula concentrates errors near 100\% accuracy across all sparsities; the naive formula degrades at high $\alpha$ (Figure~\ref{fig:error_dist}).

\begin{figure}[H]
\begin{center}
\includegraphics[width=\linewidth]{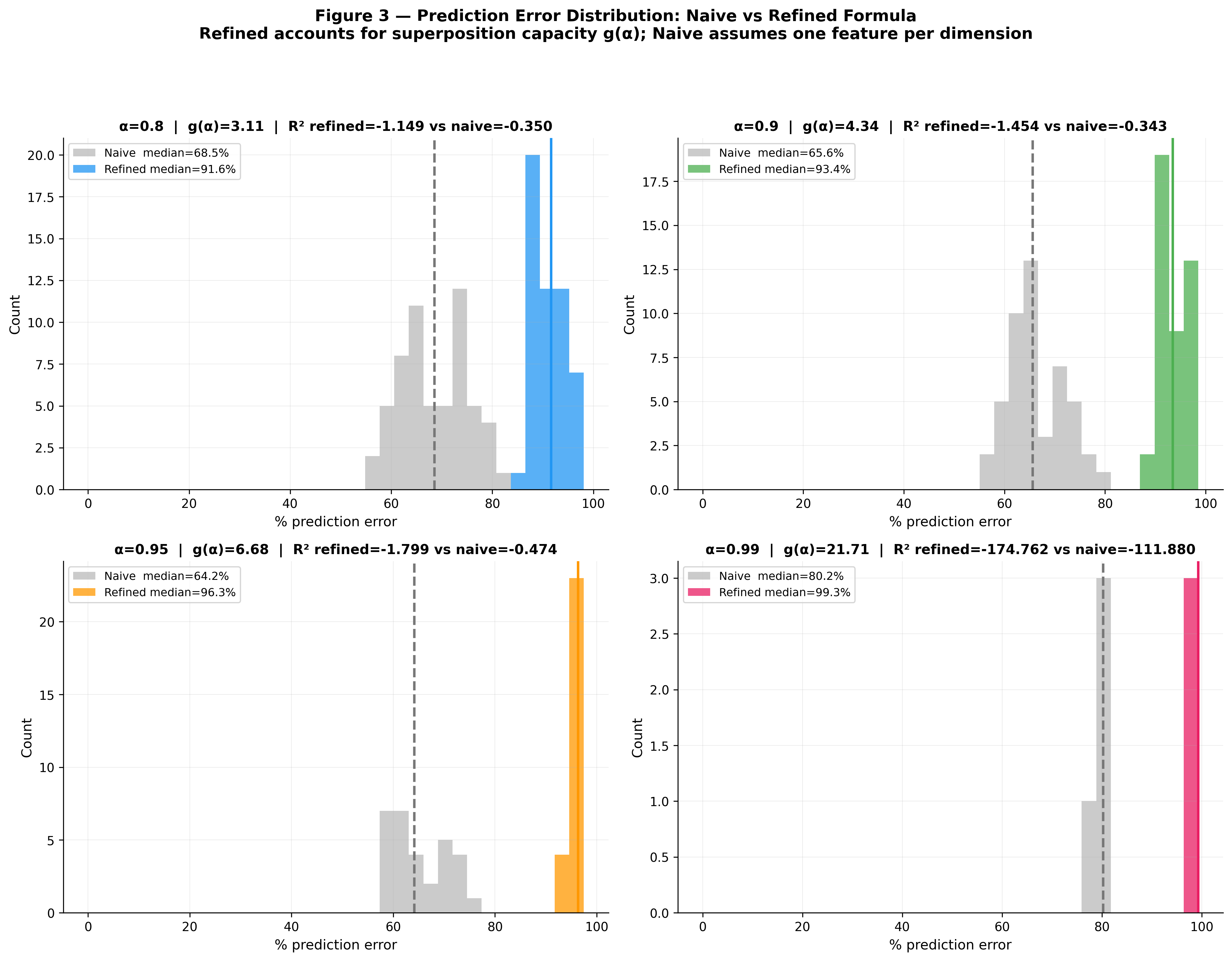}
\end{center}
\vspace{-0.5em}
\caption{Prediction error by sparsity. Refined (colored): median $>90\%$ accuracy; naive (gray): degrades at high $\alpha$.}
\label{fig:error_dist}
\end{figure}
\paragraph{Error heatmap.} The refined formula achieves $>93\%$ accuracy in nearly all configurations, reaching 100\% at $\alpha = 0.99$ (Figure~\ref{fig:error_heatmap}).

\begin{figure}[H]
\begin{center}
\includegraphics[width=\linewidth]{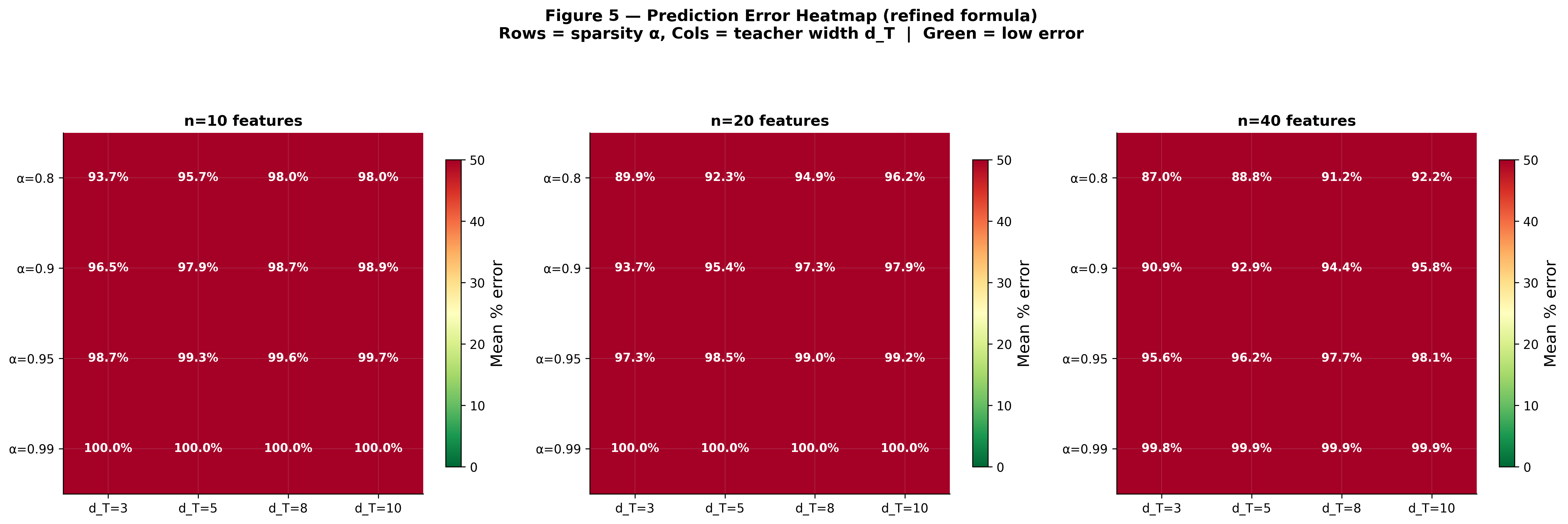}
\end{center}
\vspace{-0.5em}
\caption{Accuracy heatmap (refined). Rows: $\alpha$; cols: $\dT$; panels: $n$. Green = $>99\%$.}
\label{fig:error_heatmap}
\end{figure}
\paragraph{Zipf importance.} The toy model uses $I_i \propto 1/i$, matching real SAE distributions (Figure~\ref{fig:zipf}).

\begin{figure}[H]
\begin{center}
\includegraphics[width=\linewidth]{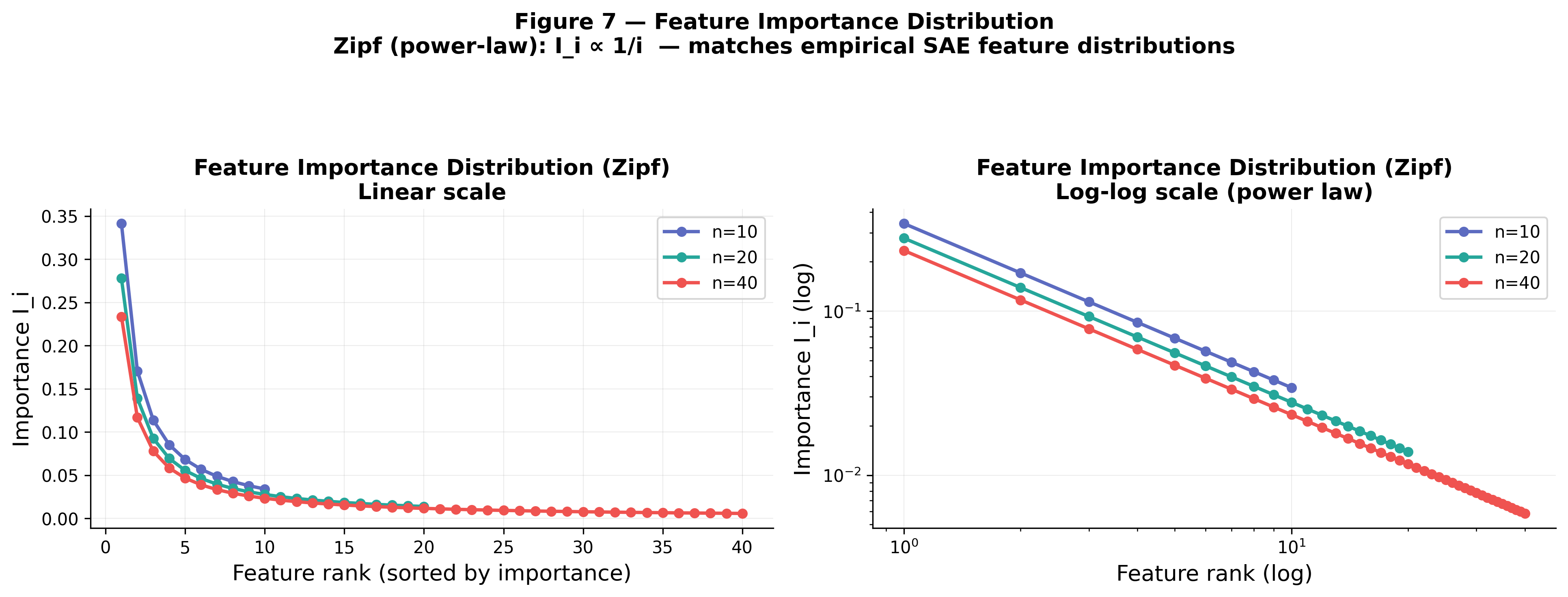}
\end{center}
\vspace{-0.5em}
\caption{Zipf importance. \textbf{Left:} Linear scale. \textbf{Right:} Log-log confirms power law.}
\label{fig:zipf}
\end{figure}
\paragraph{Universal scaling.} When plotted against $\dS/\dstar$, all configurations collapse onto one curve (Figure~\ref{fig:toy_critical}): floor ${\sim}1$ for $\dS \ll \dstar$, sharp drop at $\dS = \dstar$, zero beyond.

\begin{figure}[H]
\begin{center}
\includegraphics[width=0.65\linewidth]{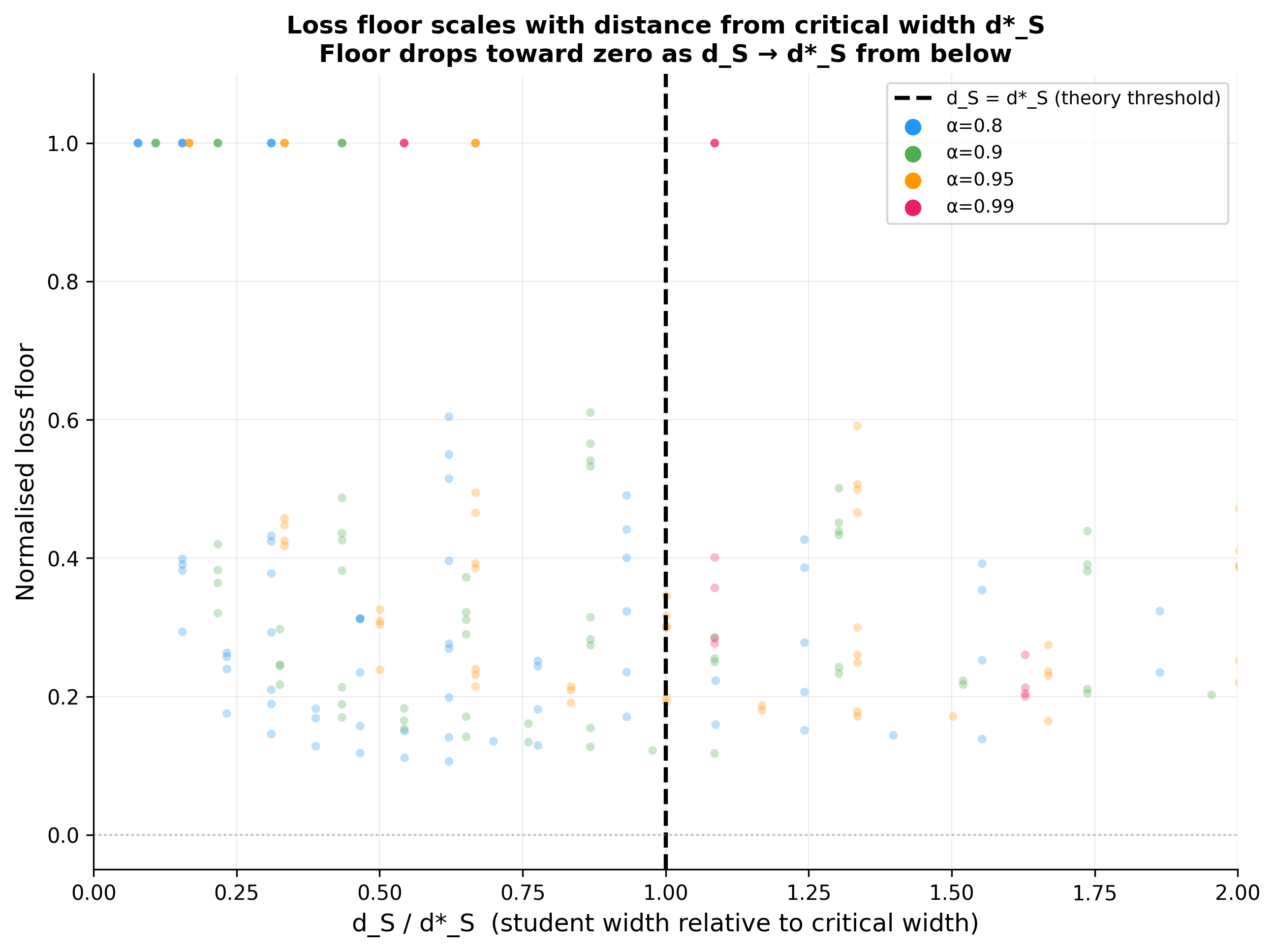}
\end{center}
\vspace{-0.5em}
\caption{Normalized floor vs.\ $\dS/\dstar$. All configurations collapse: floor drops sharply at $\dS = \dstar$ (dashed). This universal scaling confirms the phase transition.}
\label{fig:toy_critical}
\end{figure}
\paragraph{Training dynamics.} Students converge to distinct floors within ${\sim}200$ steps, confirming the floor is capacity-limited, not training-limited (Figure~\ref{fig:training_curves}).

\begin{figure}[H]
\begin{center}
\includegraphics[width=\linewidth]{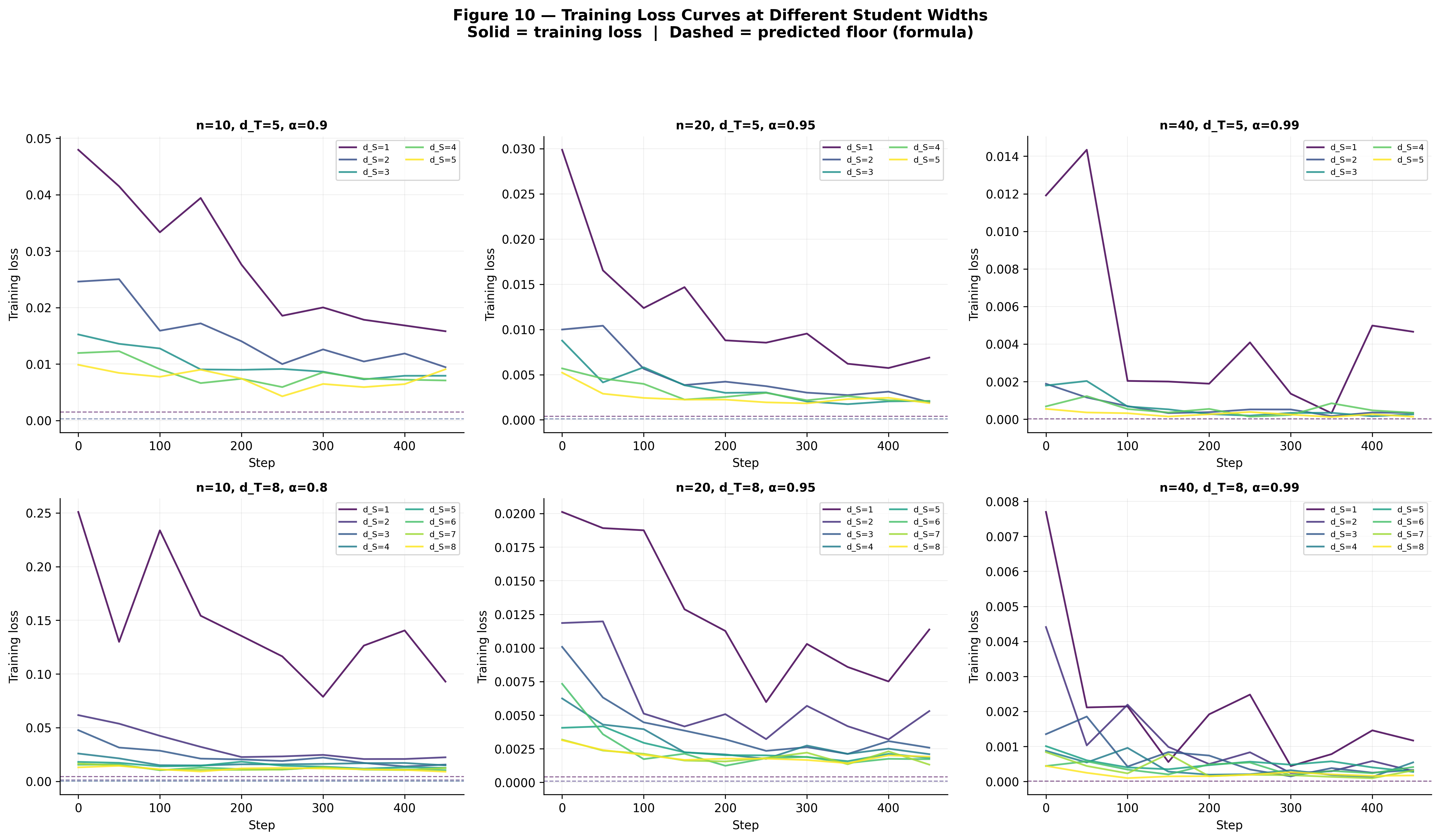}
\end{center}
\vspace{-0.5em}
\caption{Training curves at different widths for six configurations. Dashed = predicted floors. Rapid convergence confirms geometric origin.}
\label{fig:training_curves}
\end{figure}

\section{Student architecture details}
\label{app:students}
\vspace{-1em}
\begin{table}[H]
\begin{center}
\setlength{\tabcolsep}{8pt}
\renewcommand{\arraystretch}{1.1}
\begin{tabular}{lccccc}
\toprule
$\dS$ & Layers & Heads & FFN dim & Params & Ratio \\
\midrule
128  & 24 & 8  & 512   & ${\sim}18$M  & 23$\times$ \\
256  & 24 & 16 & 1024  & ${\sim}45$M  & 9$\times$ \\
512  & 24 & 16 & 2048  & ${\sim}127$M & 3.2$\times$ \\
768  & 24 & 16 & 3072  & ${\sim}247$M & 1.6$\times$ \\
1024 & 24 & 16 & 4096  & ${\sim}405$M & 1.0$\times$ \\
\bottomrule
\end{tabular}
\end{center}
\caption{Student architectures. All share teacher's depth (24), vocabulary (50{,}304), and positional encoding.}
\label{tab:students}
\end{table}

\section{SAE training details}
\label{app:sae}
\vspace{-1em}
Architecture: pre-encoder bias $b_\mathrm{pre} \in \R^{1024}$, encoder $W_\mathrm{enc} \in \R^{32768 \times 1024}$, decoder $W_\mathrm{dec} \in \R^{1024 \times 32768}$ (ReLU, unit-norm decoder columns). Training: Adam ($\eta = 3 \times 10^{-4}$, $\beta_1 = 0.9$, $\beta_2 = 0.999$), gradient clipping at 1.0, $\lambda = 8 \times 10^{-4}$ (summed over features, averaged over batch), 300M tokens, batches of $32 \times 1024$.

\begin{figure}[H]
\begin{center}
\begin{subfigure}[b]{0.24\linewidth}
    \includegraphics[width=\linewidth]{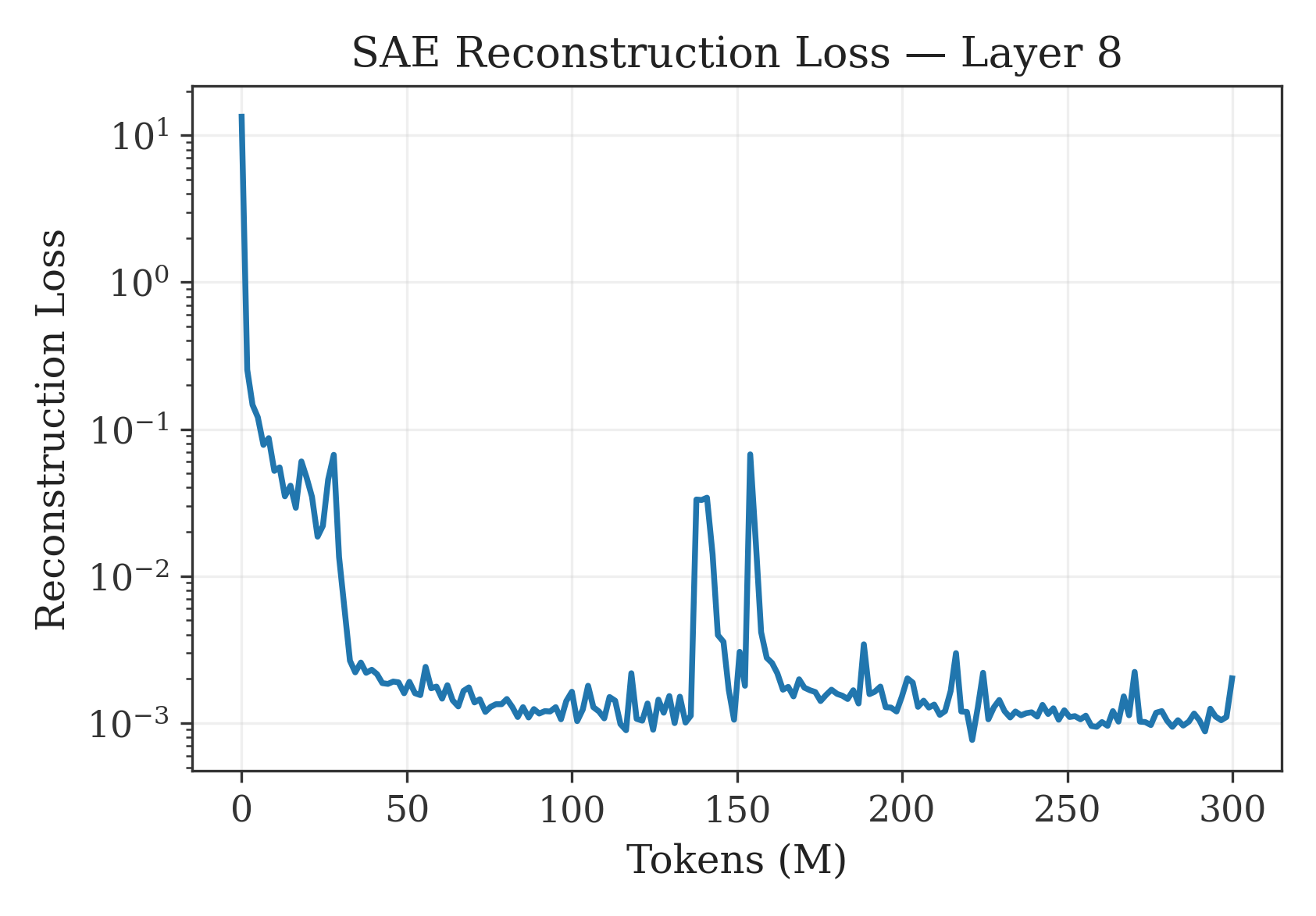}
    \caption{Recon, L8}
\end{subfigure}
\hfill
\begin{subfigure}[b]{0.24\linewidth}
    \includegraphics[width=\linewidth]{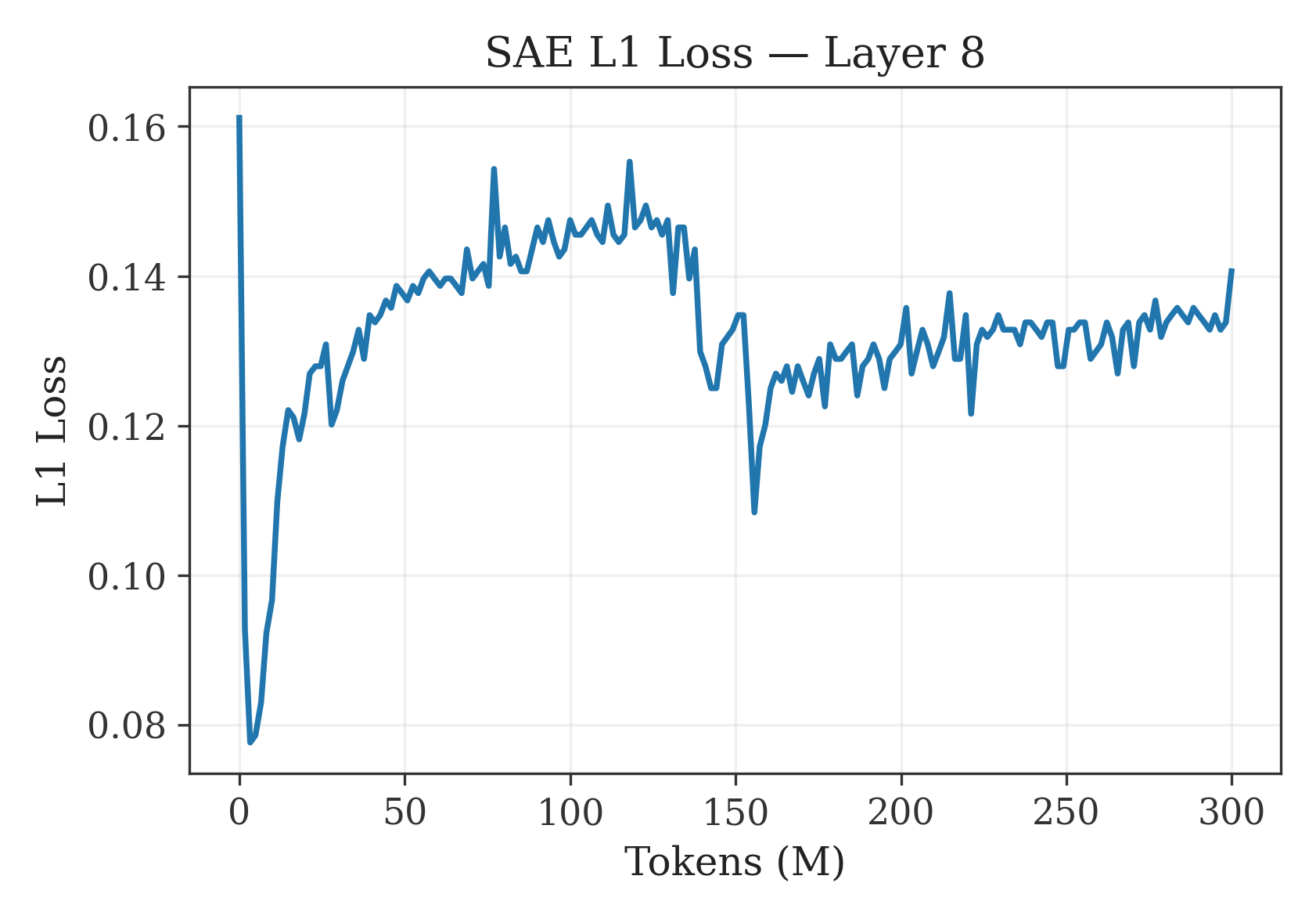}
    \caption{L1, L8}
\end{subfigure}
\hfill
\begin{subfigure}[b]{0.24\linewidth}
    \includegraphics[width=\linewidth]{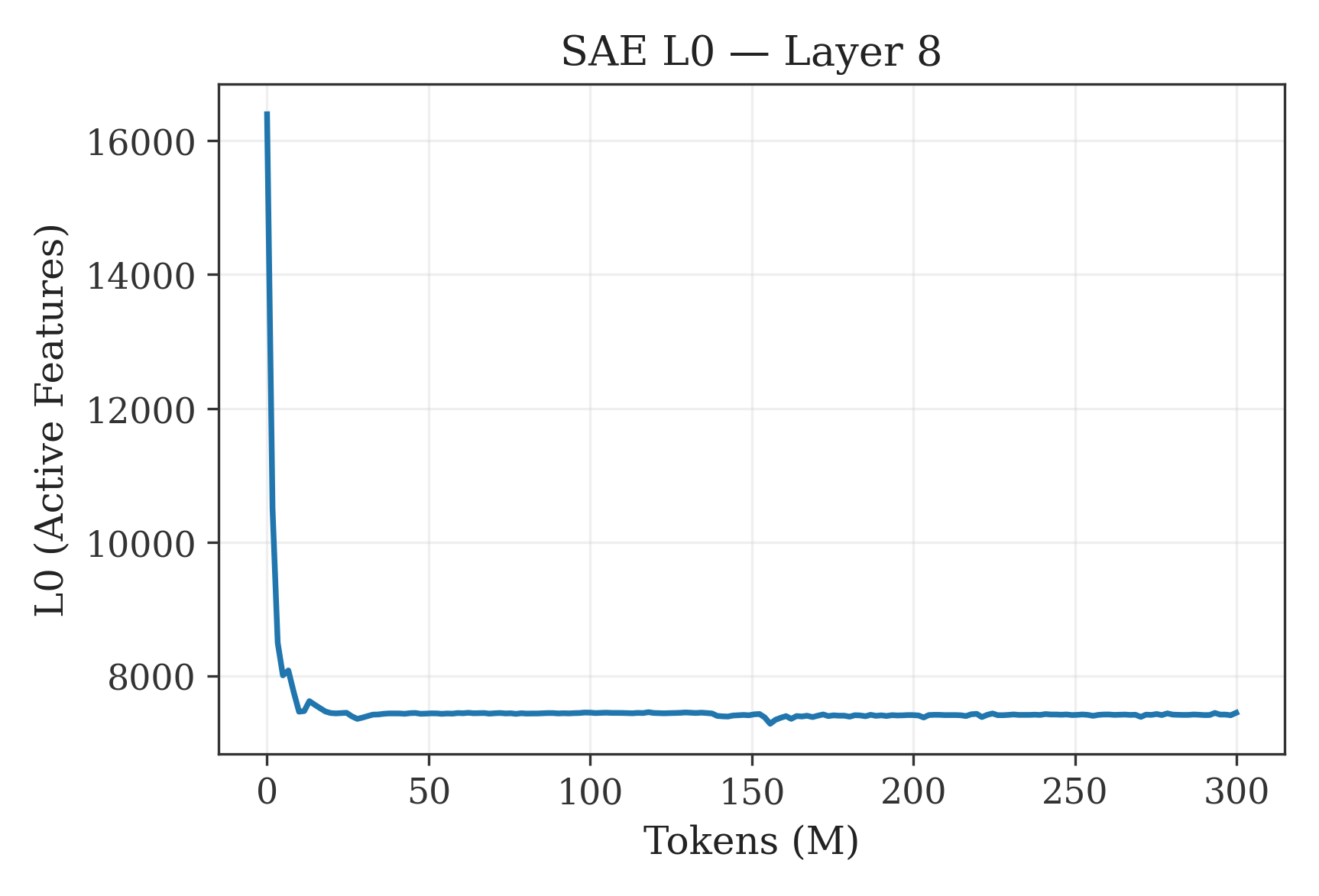}
    \caption{$L_0$, L8}
\end{subfigure}
\hfill
\begin{subfigure}[b]{0.24\linewidth}
    \includegraphics[width=\linewidth]{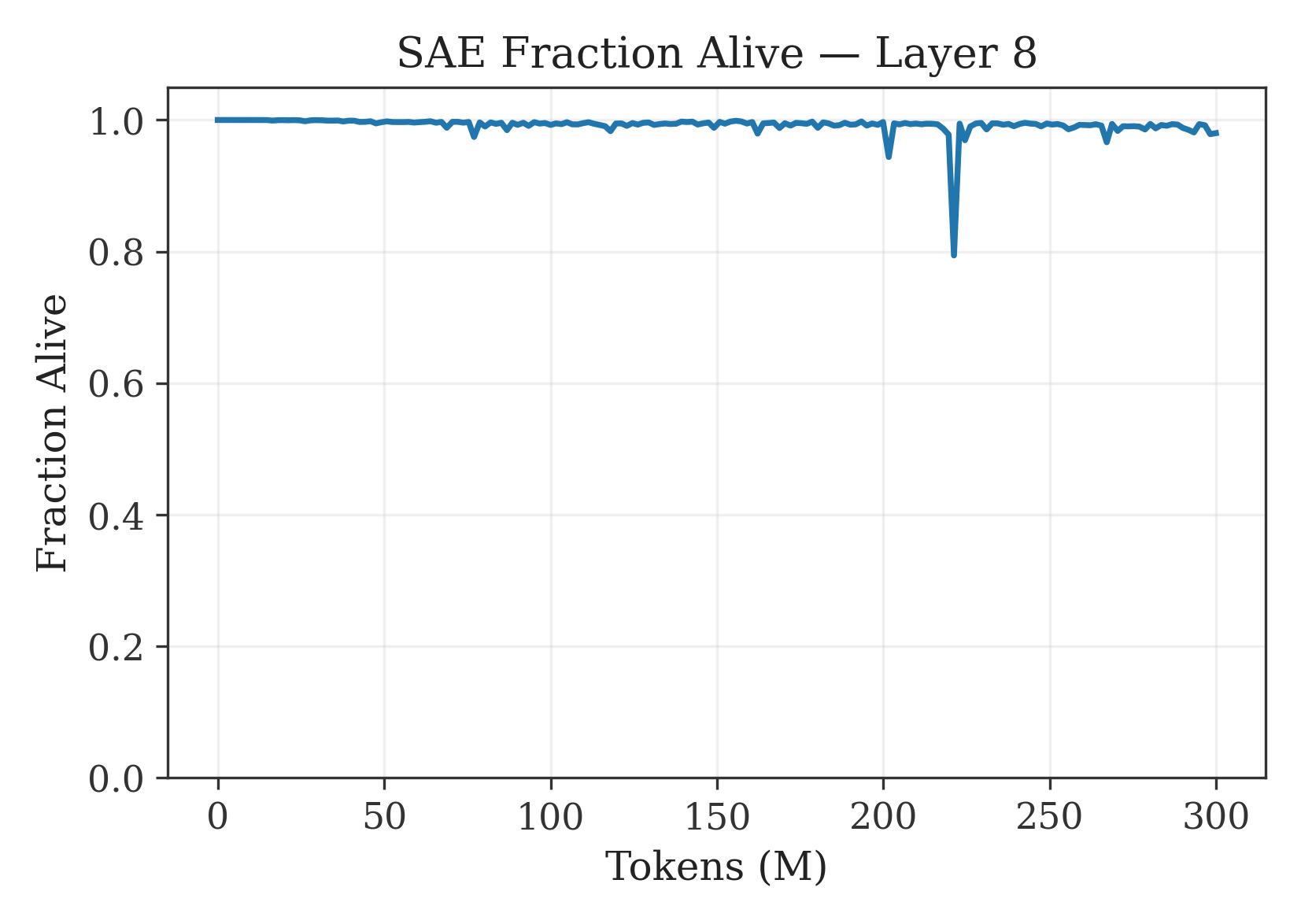}
    \caption{Alive, L8}
\end{subfigure}
\vspace{0.3em}
\begin{subfigure}[b]{0.24\linewidth}
    \includegraphics[width=\linewidth]{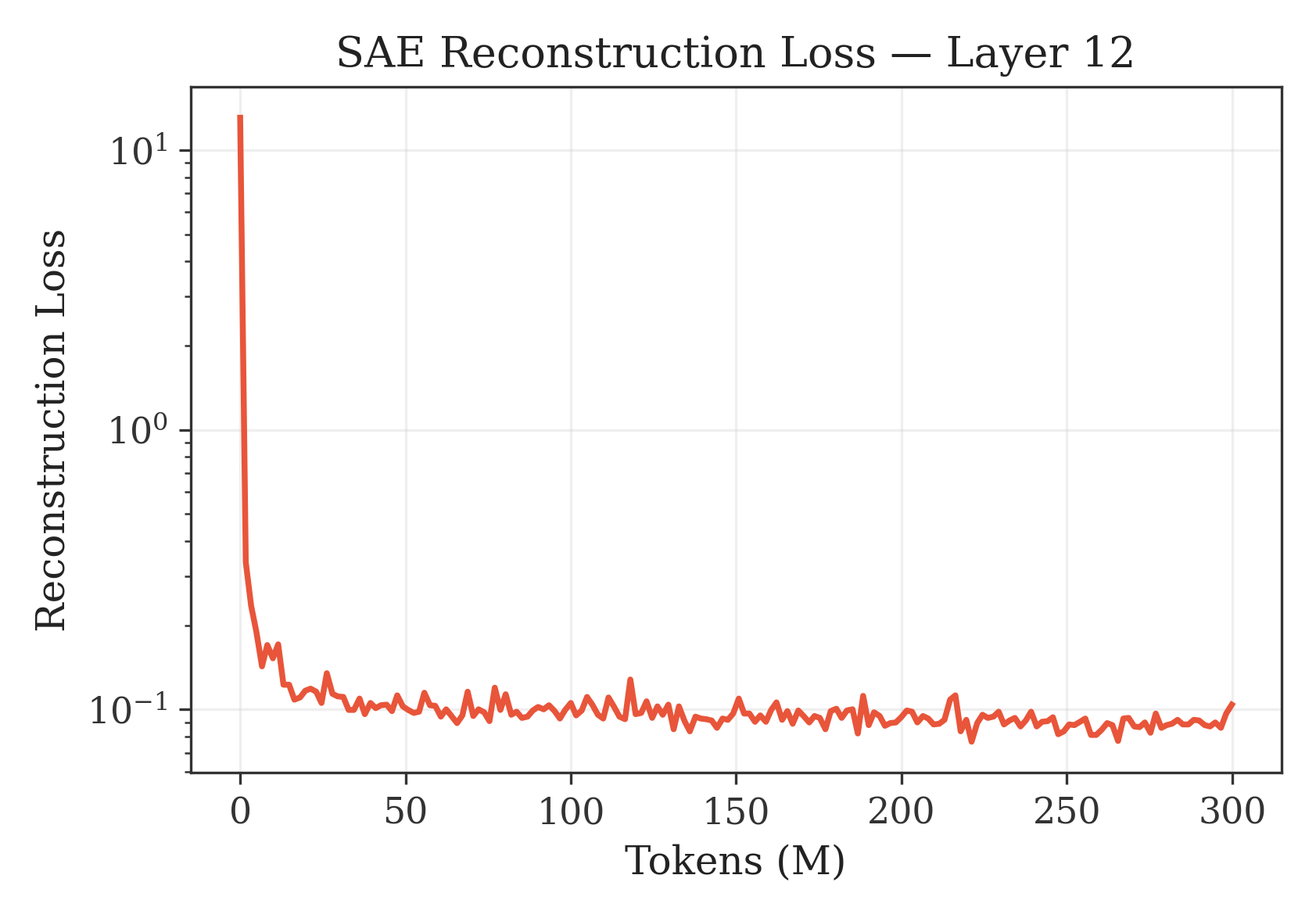}
    \caption{Recon, L12}
\end{subfigure}
\hfill
\begin{subfigure}[b]{0.24\linewidth}
    \includegraphics[width=\linewidth]{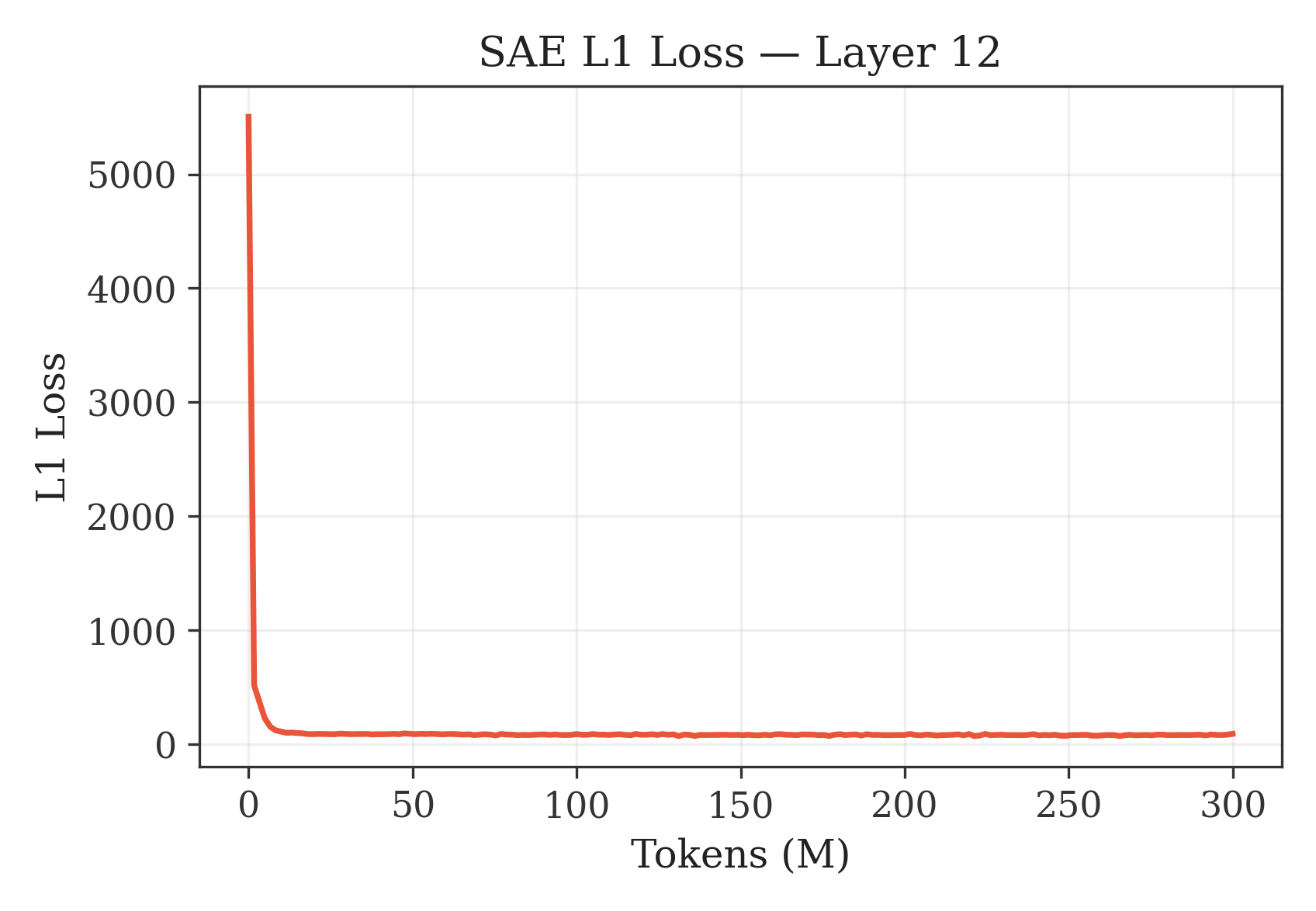}
    \caption{L1, L12}
\end{subfigure}
\hfill
\begin{subfigure}[b]{0.24\linewidth}
    \includegraphics[width=\linewidth]{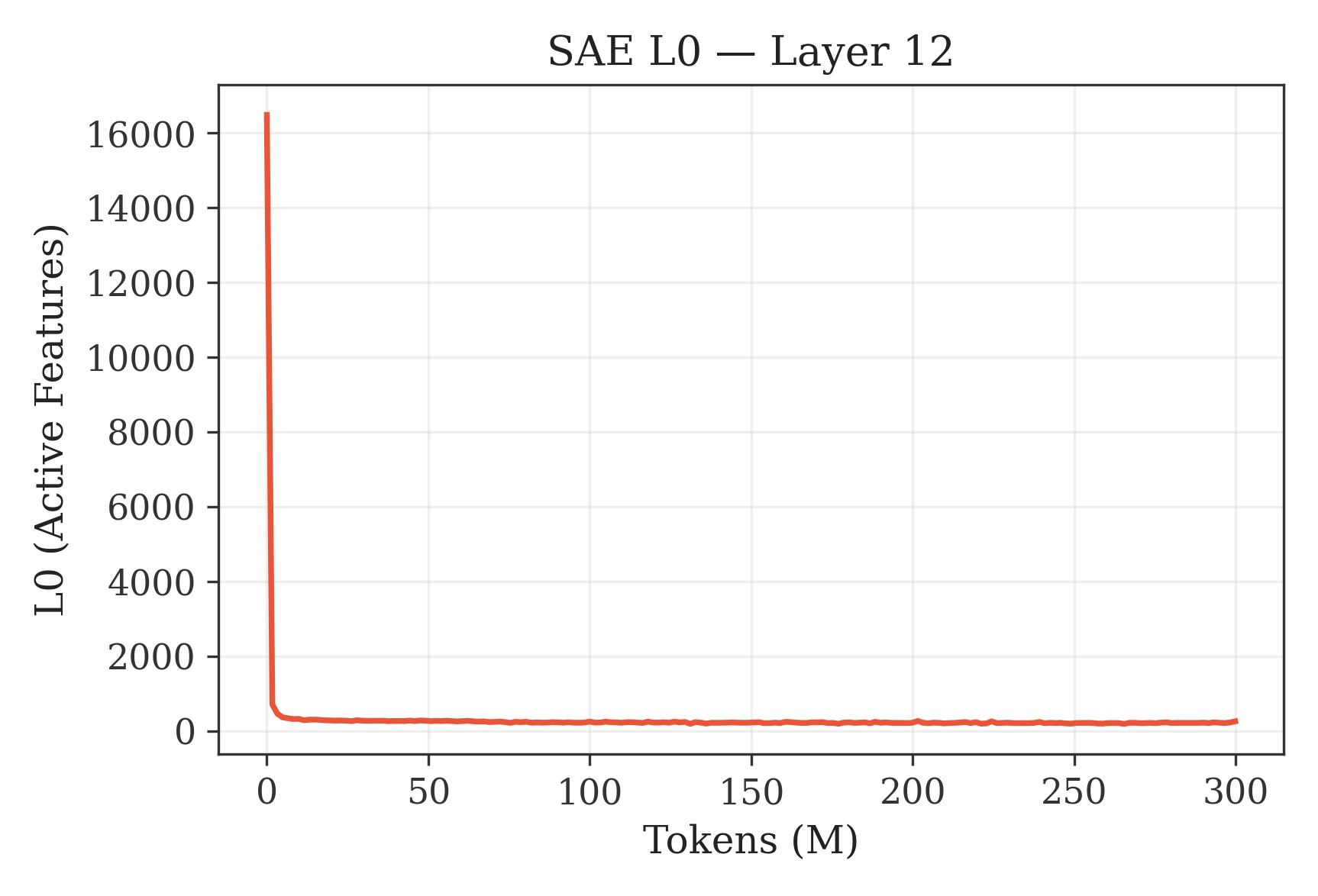}
    \caption{$L_0$, L12}
\end{subfigure}
\hfill
\begin{subfigure}[b]{0.24\linewidth}
    \includegraphics[width=\linewidth]{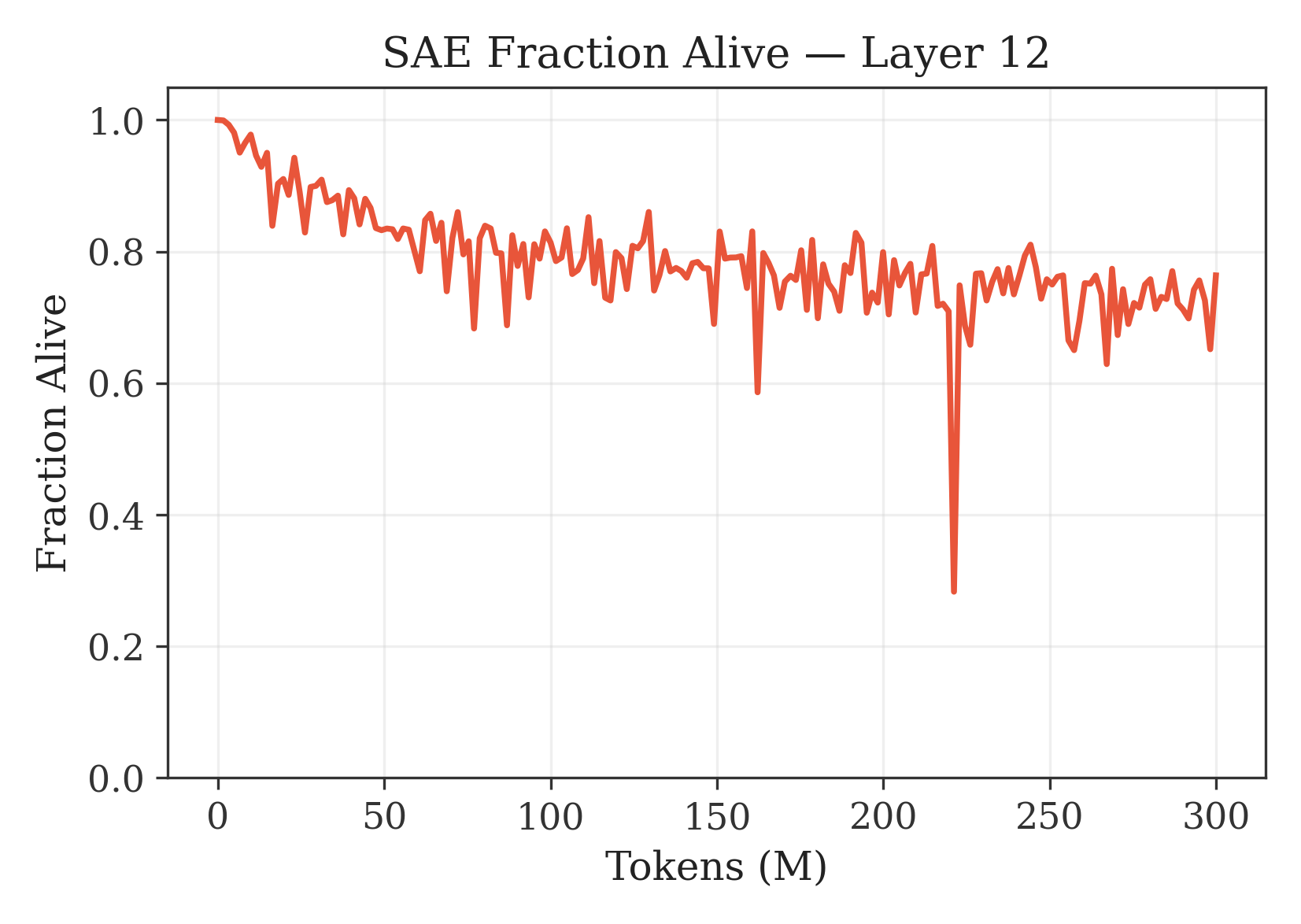}
    \caption{Alive, L12}
\end{subfigure}
\vspace{0.3em}
\begin{subfigure}[b]{0.24\linewidth}
    \includegraphics[width=\linewidth]{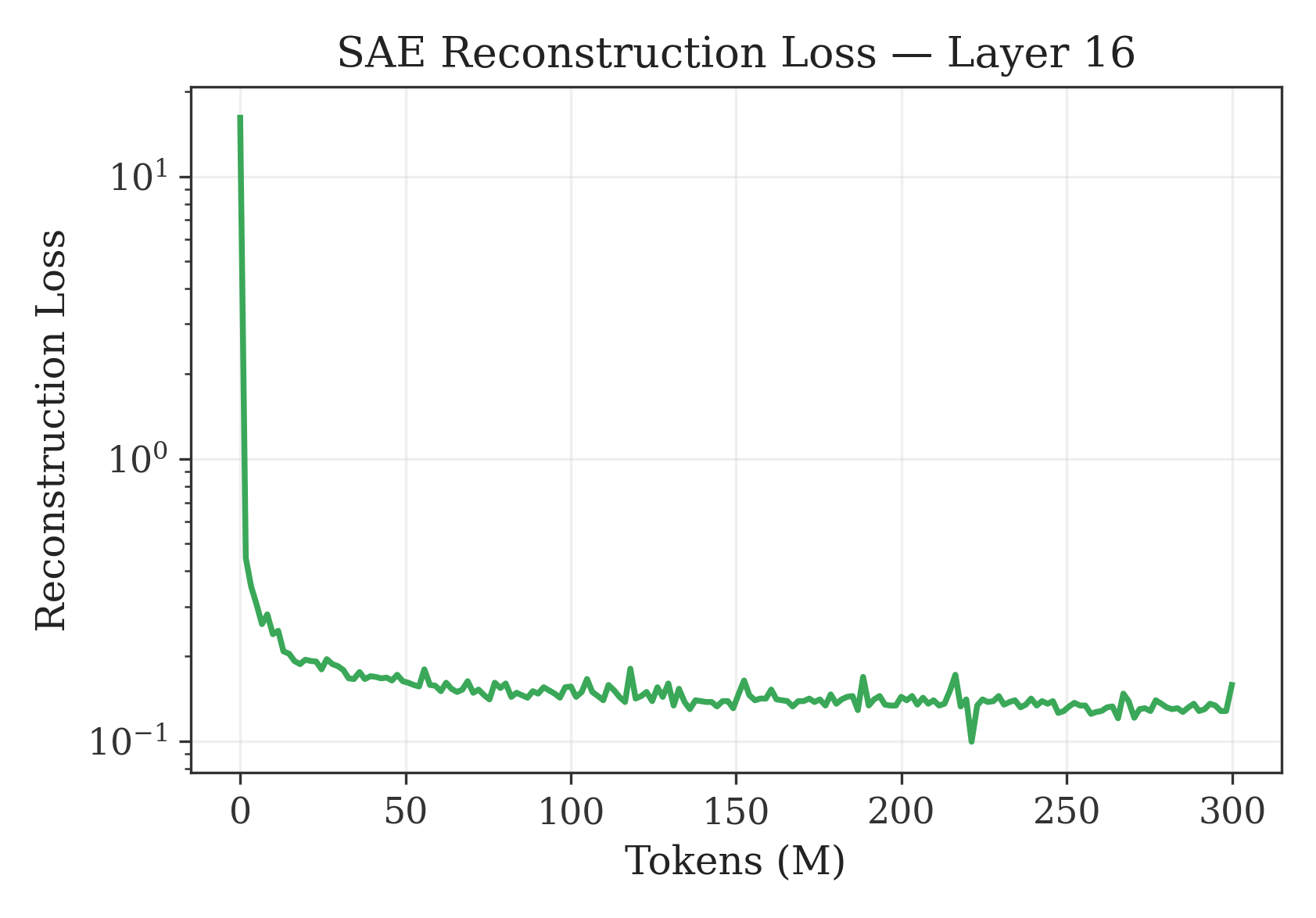}
    \caption{Recon, L16}
\end{subfigure}
\hfill
\begin{subfigure}[b]{0.24\linewidth}
    \includegraphics[width=\linewidth]{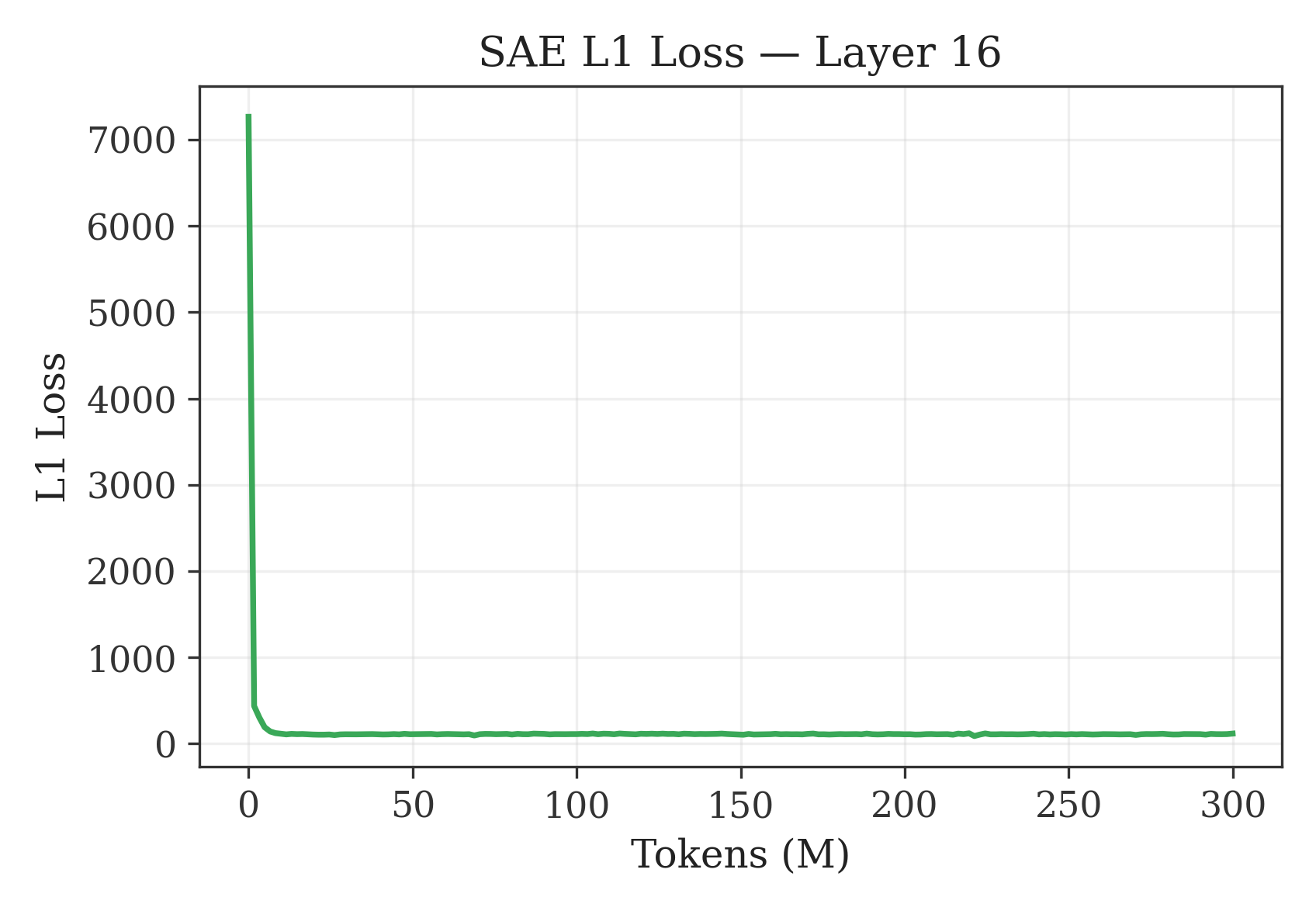}
    \caption{L1, L16}
\end{subfigure}
\hfill
\begin{subfigure}[b]{0.24\linewidth}
    \includegraphics[width=\linewidth]{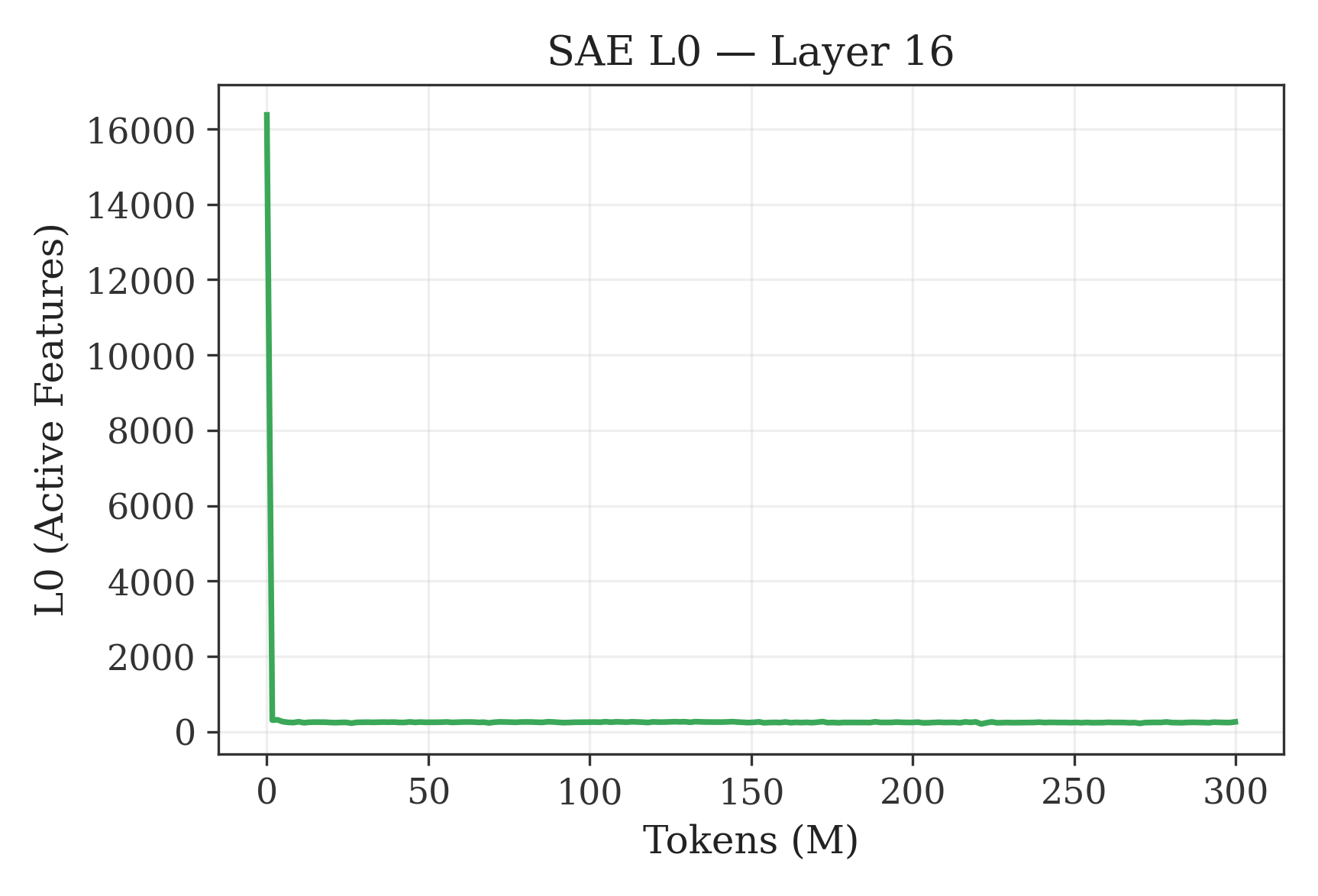}
    \caption{$L_0$, L16}
\end{subfigure}
\hfill
\begin{subfigure}[b]{0.24\linewidth}
    \includegraphics[width=\linewidth]{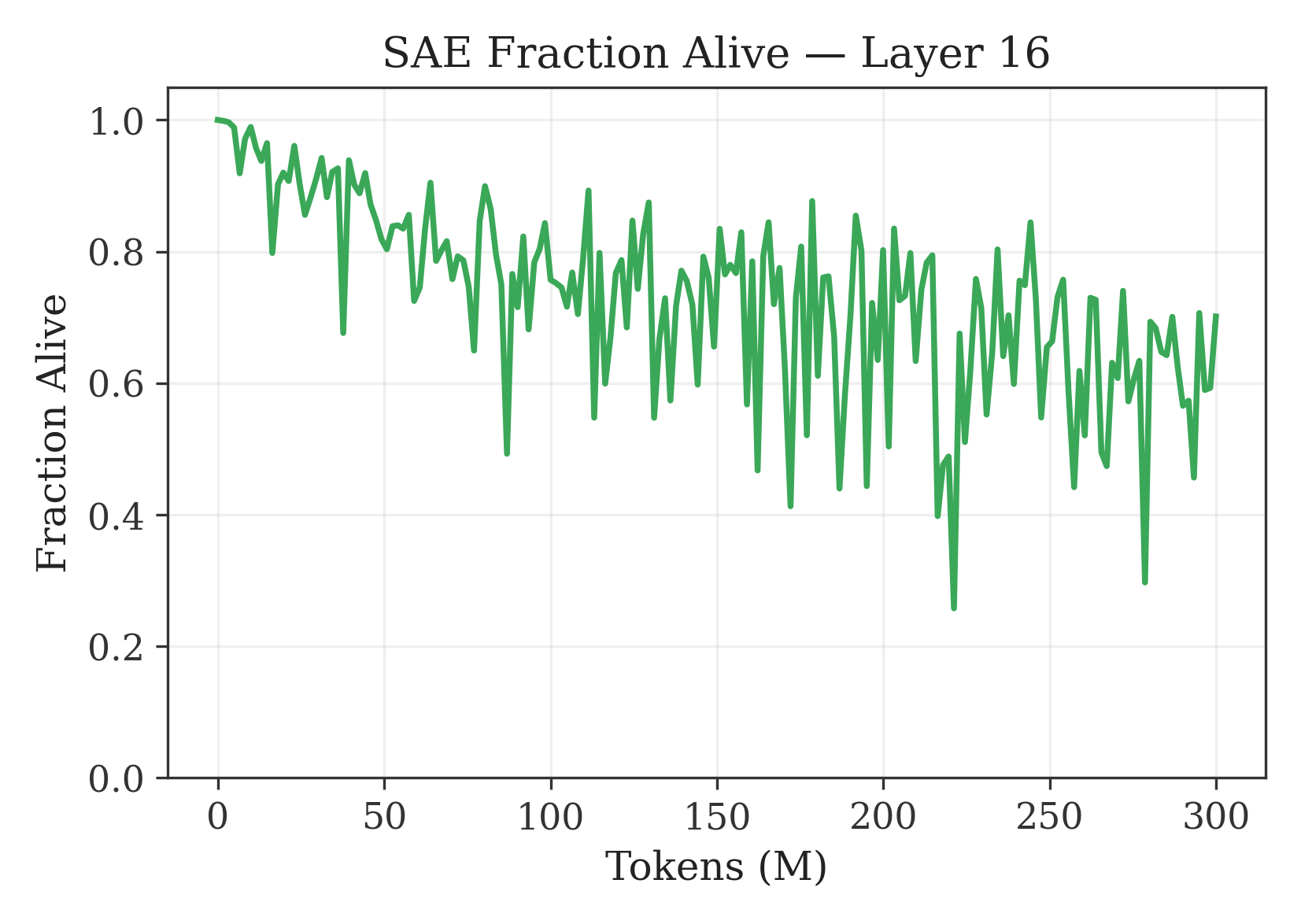}
    \caption{Alive, L16}
\end{subfigure}
\end{center}
\caption{Per-layer SAE curves: layers 8 (top), 12 (mid), 16 (bottom). Layer 8 has lower recon loss, higher $L_0$, near-zero feature death.}
\label{fig:sae_training_individual}
\end{figure}

\section{Distillation training details}
\label{app:distill}

KL distillation at $T=2$, AdamW ($\eta = 3\times10^{-4}$, decay 0.01), warmup 1{,}000 steps, cosine decay, 30{,}000 steps, batch $32 \times 512$. Floor = mean eval loss over final 10\%.

\begin{figure}[H]
\begin{center}
\begin{subfigure}[b]{0.48\linewidth}
    \includegraphics[width=\linewidth]{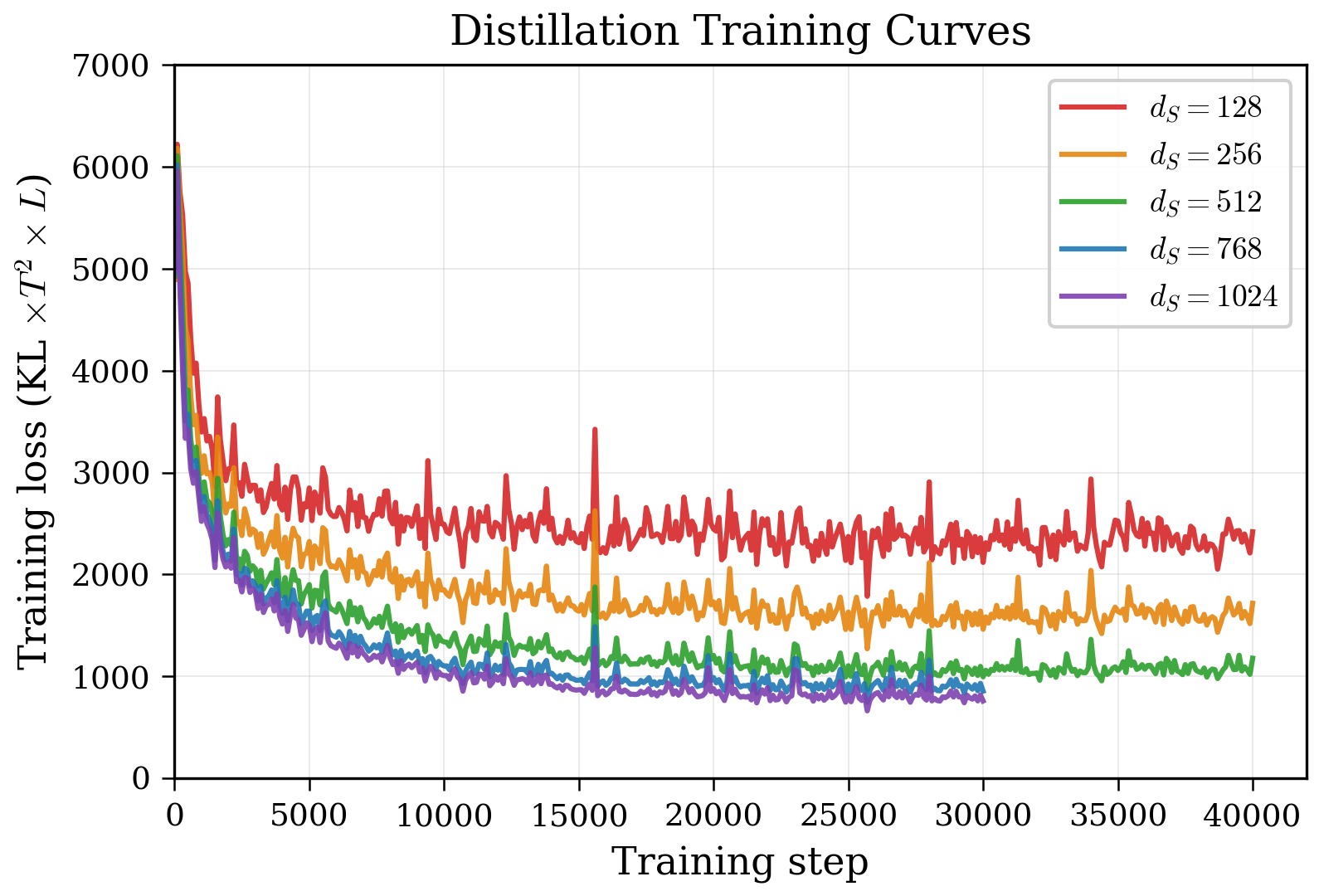}
    \caption{Training loss curves}
\end{subfigure}
\hfill
\begin{subfigure}[b]{0.48\linewidth}
    \includegraphics[width=\linewidth]{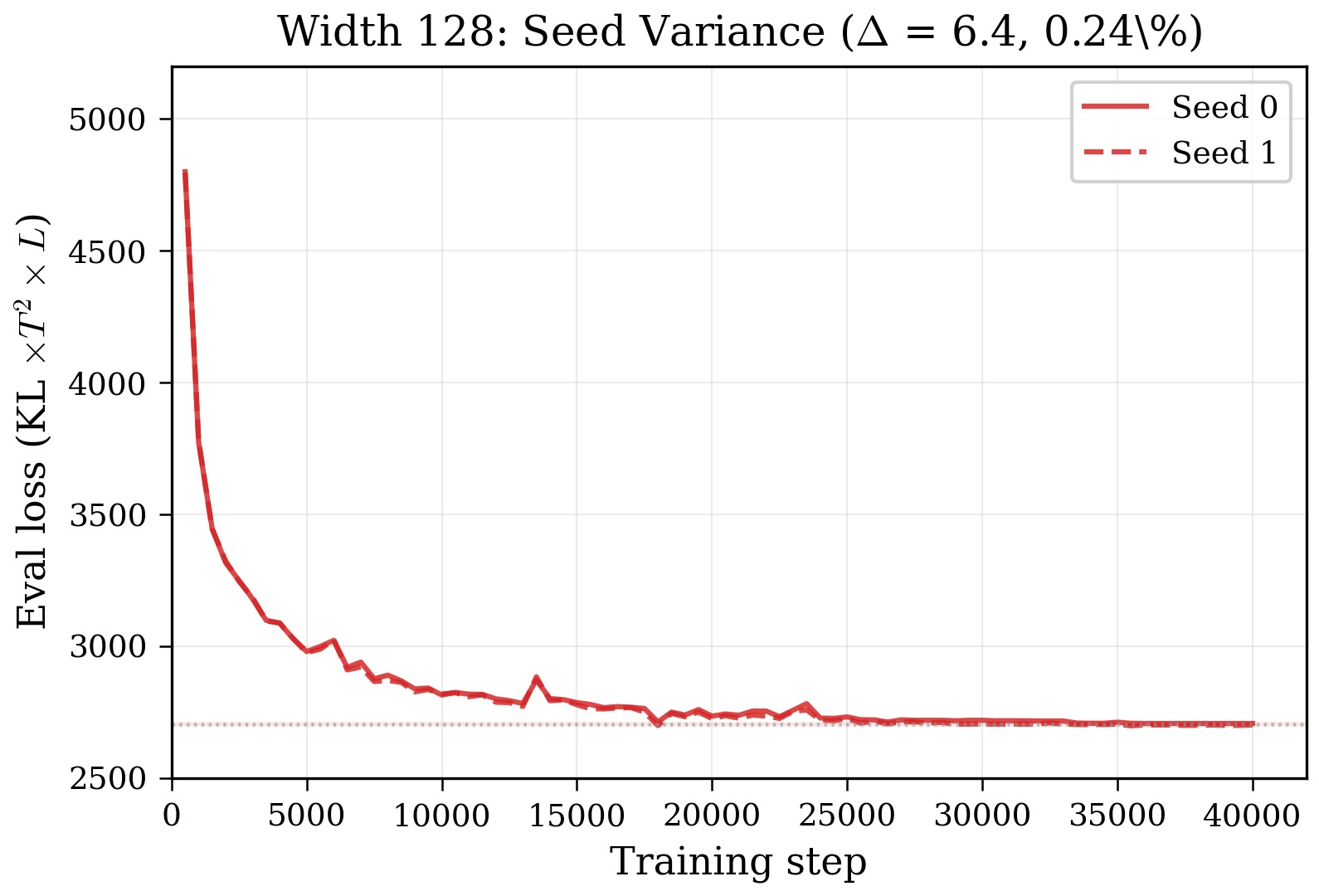}
    \caption{Seed variance ($\dS = 128$)}
\end{subfigure}
\end{center}
\vspace{-0.5em}
\caption{\textbf{(a)} Training loss for all widths. \textbf{(b)} Two seeds at $\dS=128$: floors differ by $\Delta = 6.4$ (0.24\%), confirming the floor is deterministic.}
\label{fig:distill_details}
\end{figure}

\begin{figure}[H]
\begin{center}
\includegraphics[width=0.65\linewidth]{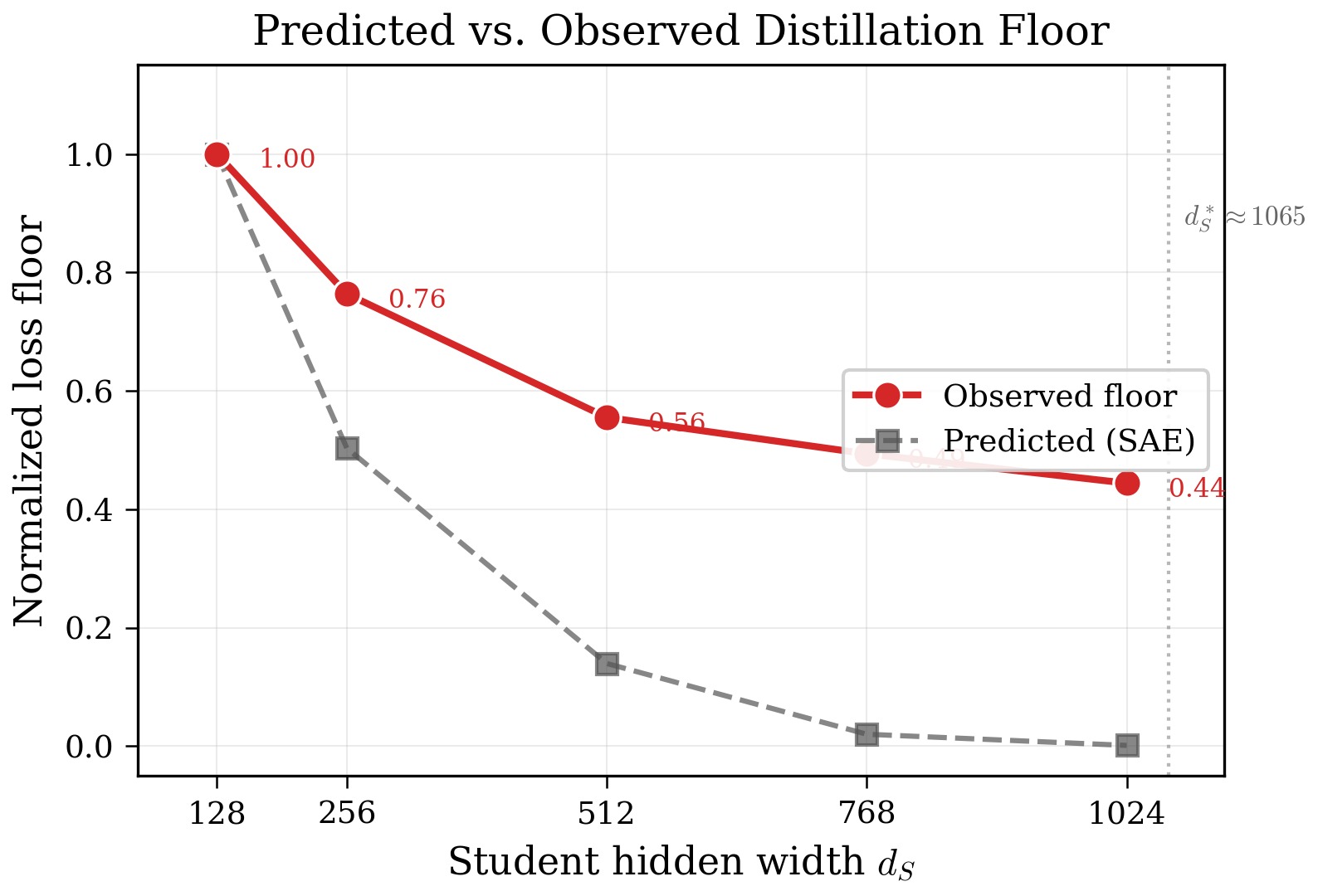}
\end{center}
\vspace{-0.5em}
\caption{Normalized predicted (SAE, dashed gray) vs.\ observed (distillation, solid red) floors. Both decrease monotonically; the widening gap reflects the constant baseline $B$ dominating at larger widths (see Table~\ref{tab:decomp}).}
\label{fig:pred_vs_obs}
\end{figure}

\begin{figure}[H]
\begin{center}
\includegraphics[width=\linewidth]{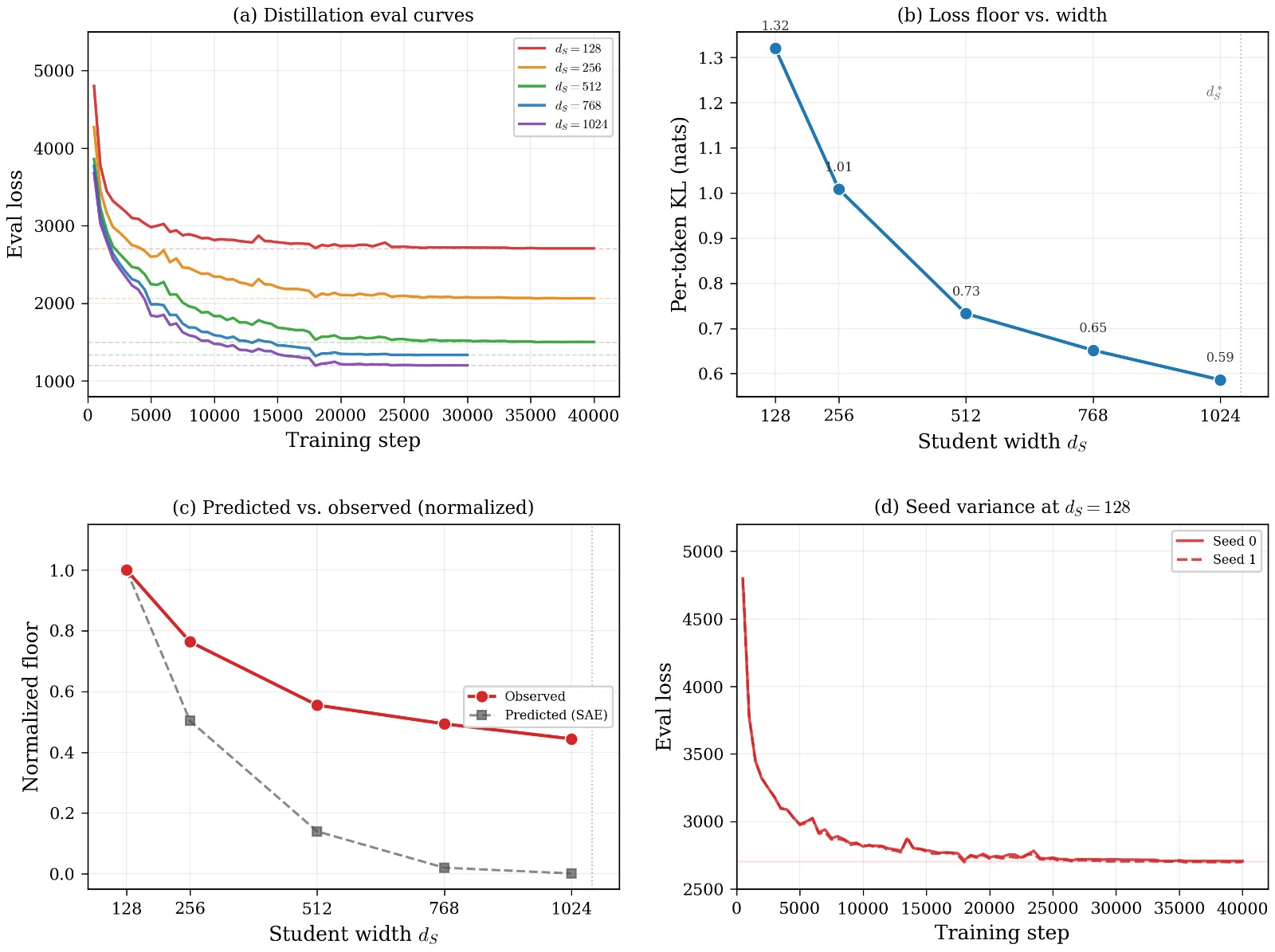}
\end{center}
\vspace{-0.5em}
\caption{Distillation summary. \textbf{Top left:} eval curves with floor estimates. \textbf{Top right:} per-token KL floor vs.\ width. \textbf{Bottom left:} normalized observed vs.\ predicted floors. \textbf{Bottom right:} seed variance at $\dS = 128$.}
\label{fig:distill_summary}
\end{figure}

\end{document}